\documentclass[journal]{IEEEtran}
\usepackage{amsmath,amsfonts}
\usepackage{array}
\usepackage[caption=false,font=normalsize,labelfont=sf,textfont=sf]{subfig}
\usepackage{textcomp}
\usepackage{stfloats}
\usepackage{url}
\usepackage{verbatim}
\usepackage{graphicx}
\usepackage{cite}
\usepackage{xcolor}
\usepackage{multirow}
\usepackage{multicol}
\usepackage{pifont}
\usepackage{threeparttable}
\usepackage{hyperref}
\usepackage{xurl}
\usepackage{amssymb}
\usepackage{amsthm}
\usepackage{enumitem}
\usepackage{orcidlink}
\usepackage{pdfpages}
\usepackage[ruled,vlined,linesnumbered]{algorithm2e}

\SetCommentSty{mycommentsty}
\setlist[itemize]{leftmargin=1.2em}

\SetKwInput{KwIn}{Input}
\SetKwInput{KwOut}{Output}
\SetKwFor{ForParEach}{parallel for each}{do}{end}
\newcommand{\StepTcp}[1]{%
  \vspace{0.35em}%
  \tcp{\scalebox{0.95}[1]{#1}}%
}

\newtheorem{lemma}{Lemma}
\newtheorem{corollary}{Corollary}

\newcommand{\xmark}{\ding{55}}%
\newcommand{\tnotea}{\textsuperscript{\textit{a}}}
\newcommand{\tnoteb}{\textsuperscript{\textit{b}}}
\newcommand{\tnotec}{\textsuperscript{\textit{c}}}
\newcommand{\tnoted}{\textsuperscript{\textit{d}}}

\newcommand{\orcidnobox}[1]{{\hypersetup{hidelinks}\orcidlink{#1}}}

\begin{document}

\title{GPU-Accelerated Computation of Persistent Homology for Topological Analysis of Image Data}
\author{
    Fan~Wang\,\orcidnobox{0000-0002-5402-5065},
    Hubert~Wagner\,\orcidnobox{0009-0009-9111-8429},
    Rezaul~Chowdhury\,\orcidnobox{0000-0002-7022-5278},
    and~Chao~Chen\,\orcidnobox{0000-0003-1703-6483}
    \thanks{\copyright~2026 IEEE. Personal use of this material is permitted. Permission from IEEE must be obtained for all other uses, in any current or future media, including reprinting/republishing this material for advertising or promotional purposes, creating new collective works, for resale or redistribution to servers or lists, or reuse of any copyrighted component of this work in other works. Accepted for publication in IEEE Transactions on Pattern Analysis and Machine Intelligence, DOI 10.1109/TPAMI.2026.3741473.}%
    \thanks{Manuscript received January 24, 2026; revised July 19, 2026 and September 11, 2026; accepted October 4, 2026. \textit{(Corresponding authors: Fan Wang and Chao Chen.)}}%
    \thanks{F. Wang and R. Chowdhury are with the Department of Computer Science, and C. Chen is with the Department of Biomedical Informatics, Stony Brook University, Stony Brook, NY 11794 USA (e-mail: \nolinkurl{fanwang1@cs.stonybrook.edu}; \nolinkurl{rezaul@cs.stonybrook.edu}; \nolinkurl{chao.chen.1@stonybrook.edu}).}%
    \thanks{H. Wagner is with the Department of Mathematics, University of Florida, Gainesville, FL 32611 USA (e-mail: \nolinkurl{hwagner@ufl.edu}).}%
}

\markboth{IEEE TRANSACTIONS ON PATTERN ANALYSIS AND MACHINE INTELLIGENCE}%
{Wang \MakeLowercase{\textit{et al.}}: GPU-Accelerated Computation of Persistent Homology for Image Data}


\maketitle

\begin{abstract}
In recent years, persistent homology has seen rapid adoption in deep learning, yet its computation remains a major bottleneck in network training. This paper introduces TopoGPU, a GPU streaming pipeline that computes persistence diagrams of cubical complexes induced by 2D and 3D images. TopoGPU streams the input image chunk by chunk, processing each chunk with massively parallel GPU kernels on a grid of GPU blocks; the resulting boundary relations are accumulated in host memory, where the CPU performs the boundary matrix reduction. TopoGPU introduces a stratification-aware discrete Morse matching that provably preserves persistent homology under streaming, together with a parallel topological sorting algorithm and a parallel \(V\)-path parity algorithm for deriving Morse boundaries on the GPU. TopoGPU outperforms Cubical Ripser, a state-of-the-art method for persistent homology computation, on every benchmark evaluated, achieving an average end-to-end speedup of $53.24\times$ and a maximum of $198.01\times$. We further integrate TopoGPU into a topology-preserving deep network, demonstrating that it substantially reduces the cost of persistent homology computation during network training. TopoGPU is open source, with pre-built binaries, Google Colab notebooks, and Docker images available at the project's GitHub page: \href{https://github.com/seravee08/GPU-Computation-of-Persistent-Homology-for-Image-Data}{\nolinkurl{https://github.com/seravee08/GPU-Computation-of-Persistent-Homology-for-Image-Data}}.

\end{abstract} 

\begin{IEEEkeywords}
topological data analysis, topological image analysis, persistent homology, persistence diagram, parallel programming, GPU programming, CUDA programming, Discrete Morse theory, computational topology, digital topology, image data.
\end{IEEEkeywords}

\section{Introduction}
\label{sec:intro}

Topological data analysis (TDA) provides a principled way to capture the shape of data using topological invariants, and its integration with deep learning has given rise to topological deep learning (TDL) \cite{Zia2024TDL}. A recent review notes that TDL combines the robustness and noise-invariance of persistent homology with the expressive power of neural networks, using topological information for tasks such as classification, regression, clustering, and anomaly detection \cite{Zia2024TDL}. One major line of work uses persistent homology-based descriptors as additional features that encode multiscale connectivity for downstream predictors \cite{Cang2017TopologyNet, Dey2019, wang2021}. A second line develops topology-preserving losses and regularizers that constrain networks to respect target topology during training \cite{hu2019, Chen2019TopoReg, wang2020, hu2021, Clough2022, gupta2025}. A third line integrates topological summaries directly into deep models, including architectures that incorporate topological signatures into graph neural networks, transformers, and autoencoders \cite{Zia2024TDL}.

As a cornerstone of TDA, persistent homology quantifies how topological features such as connected components, loops, and voids appear and disappear across scales and summarizes them as barcodes or persistence diagrams. This evolution of topological features is captured within a series of nested spaces, known as a filtration. Selecting the appropriate filtration depends on the data type: Alpha-shape filtration for 3D point clouds \cite{Edelsbrunner1994}, Vietoris–Rips for high-dimensional metric data \cite{Edelsbrunner2010}, and cubical filtration for 2D and 3D grayscale images \cite{phbook2004, Wagner2012}. This work focuses on the computation of persistent homology for 2D and 3D images, a standard representation across scientific and engineering applications. TopoGPU focuses exclusively on cubical complexes induced by 2D and 3D images; it does not target point clouds, Vietoris-Rips complexes, or general simplicial complexes. Such images may originate from (i) acquisition processes (e.g., CT scans in medical imaging), (ii) numerical computations (e.g., scalar fields from physical simulations), and (iii) intermediate outputs of convolutional neural networks, notably likelihood maps.

While alternative topological descriptors such as merge trees \cite{Nigmetov2019}, contour trees \cite{Gueunet2016}, connected components \cite{Isenburg2011}, and Euler characteristic curves \cite{wagner2017, gpuecc} offer analytical value, they have clear limitations: the first three omit higher-dimensional topology, whereas Euler characteristic curves do not preserve topological features' persistence information. Persistent homology addresses these limitations but introduces significant computational cost. Despite notable advances, persistent homology computation remains a bottleneck during network training; this stands in sharp contrast to modern graphics processing unit (GPU)-optimized deep learning workflows. These workflows keep tensors and computation resident on the device (GPU and its memory) for efficiency. Offloading intermediate tensors (e.g., likelihood maps) to the host (CPU and RAM) for persistent homology computation incurs substantial PCIe transfer and synchronization overhead. This motivates a GPU-based persistent homology pipeline that minimizes host-device transfers and keeps most computations on the GPU.

\IEEEpubidadjcol

Our goal with TopoGPU is twofold: (i) to deliver substantial performance gains and an efficient, reliable computational tool for TDA on 2D/3D image data, and (ii) to catalyze broader adoption of persistent homology within deep learning pipelines. The core novelty of TopoGPU lies in its algorithmic design: a stratification-aware Morse matching with a correctness guarantee under streaming, and parallel algorithms that make Morse boundary derivation feasible on GPUs; the streaming pipeline and GPU implementation realize this design in practice. Our main contributions are as follows.

\begin{itemize}
    \item We present TopoGPU, a massively parallel GPU algorithm for persistent homology computation of cubical complexes induced by 2D/3D grayscale images. It is faster than Cubical Ripser (CRipser) \cite{kaji2020}, a state-of-the-art method, on every evaluated benchmark, with average and maximum end-to-end speedups of $53.24\times$ and $198.01\times$.

    \item We propose stratification-aware discrete Morse matching that preserves the persistent homology of the input under a streaming scheme.

    \item We propose a parallel topological sorting algorithm for bounded-degree Morse matching graphs that achieves a 46.13$\times$ average and up to 54.81$\times$ speedup over classic Kahn's algorithm.

    \item We propose a parallel \(V\)-path parity algorithm on the Morse matching graphs, running in time linear in the graph size under a parallel model with sufficient thread parallelism. Together with our parallel topological sorting algorithm, it enables efficient derivation of boundary relations of the Morse complex.

    \item We integrate TopoGPU into a topology-preserving deep learning framework, demonstrating that persistent homology computation is no longer the dominant cost during network training.
\end{itemize}

\section{Background}
\label{sec:background}
This section introduces the key concepts our GPU algorithm relies on and uses directly in Section~\ref{sec:algo}.

\noindent \textbf{Cubical complex construction from an image.} This work focuses on grid-structured data, specifically 2D and 3D images. Consider a 3D image of dimensions \((W, H, D)\) represented by the function \( g: \Omega \to \mathbb{R} \), where \( \Omega \) is a rectangular subset within \( \mathbb{Z}^3 \) spanning a discrete lattice with ranges \( 0 \leq x < W \), \( 0 \leq y < H \), and \( 0 \leq z < D \) along the \(x\)-, \(y\)-, and \(z\)- axes, respectively. While spatial coordinates in \(\Omega\) correspond to pixels in 2D and voxels in 3D, we adopt  the term \textit{voxel} throughout this work for consistency.

Building on this discrete lattice, we model the image domain as a cubical complex. Following the formalism of \cite{phbook2004}, we classify intervals within \(\mathbb{R}\) as either \textit{degenerate} \([k, k]\) or \textit{regular} \([k, k+1]\), where \(k \in \mathbb{N}\). A \(p\)-dimensional \textit{cubical cell}, or simply \(p\)-cell, within a \(d\)-dimensional space is formed by the product of \(d\) such intervals, of which \(p\) are regular. The dimension of a \(p\)-cell \(\alpha\) is denoted \(\dim(\alpha) = p\). A \textit{cubical complex} \(\mathcal{K}\) is the collection of all \(p\)-cells generated from the lattice \(\Omega\), with \(0 \leq p \leq 2\) in 2D and \(0 \leq p \leq 3\) in 3D.

A cubical complex can be constructed from an image in two ways: the \( T \)-construction which maps voxels to top-dimensional cells, and the \( V \)-construction which maps voxels to \(0\)-cells (vertices) \cite{vanessa2011-morsematch, wagner2017, Bleile2022}. These two constructions are dual and convey identical topological information. This work adopts the \( T \)-construction, wherein \(0\)-cells are vertices, \(1\)-cells are edges, \(2\)-cells are squares (equivalent to pixels in 2D), and \(3\)-cells are cubes (equivalent to voxels in 3D). In a cubical complex \(\mathcal{K}\), a \(p\)-cell \(\alpha\) is a \textit{face} of a \(q\)-cell \(\beta\) if \(\alpha \subseteq \beta\); it is a \textit{proper face} if, in addition, \(p < q\), denoted \(\alpha \subset \beta\). Conversely, we call \(\beta\) a \textit{(proper) coface} of \(\alpha\).

In the \(T\)-construction in a \(d\)-dimensional space, each top-dimensional cell \(\beta\) inherits its value \(f(\beta)\) from the corresponding input voxel. These values propagate downward to lower-dimensional faces, where each cell \( \alpha \) adopts the minimum value among its top-dimensional cofaces, formalized as \( f(\alpha) = \min\{f(\beta): \alpha \subset \beta \text{ and } \dim(\beta) = d\} \). We adapt the term ``lower star'' of a top-dimensional cell \(\beta\) to denote the set of all cells for which \(\beta\) is a minimal-valued coface, defined as:
\begin{equation}
L(\beta) = \left\{ 
\alpha \in \mathcal{K} 
\mid 
\alpha \subseteq \beta 
\text{ and } 
f(\beta) = \min_{\substack{\alpha \subseteq \gamma \\ \dim(\gamma) = d}} f(\gamma)
\right\}
\label{eq:lower_star}
\end{equation}
We informally refer to these cells as being \textit{introduced} by \(\beta\). The lower stars of all voxels \(\beta \in \Omega\) form a disjoint partition of \(\mathcal{K}\). In practical settings where voxels share identical values, a cell \(\alpha\) may have multiple top-dimensional cofaces sharing the same minimum value, creating a tie regarding which coface introduces \(\alpha\). To resolve such ties, we impose a lexicographical ordering on the 3D grid \(\Omega\) with dimensions \( W \times H \times D\), defined by the indexing function:
\begin{equation}
\phi(x,y,z) = (z \cdot W \cdot H) + (y \cdot W) + x
\label{eq:val_index}
\end{equation}
This corresponds to the row-major memory layout standard in C/C++. The ordering serves as a tie-breaker by selecting the coface with minimal \(\phi\)-value in case of a tie. Figure~\ref{fig:morsematch:b} visualizes the resulting cubical complex derived from Figure~\ref{fig:morsematch:a}, where cells within the same lower star are identically colored.

\vspace{2mm}
\noindent \textbf{Persistent homology.} In algebraic topology, \textit{homology groups} capture basic topological structures of a space \cite{Edelsbrunner2010}. For a 3D cubical complex, the homology group in dimension 0 (\(H_0\)) counts connected components, the homology group in dimension 1 (\(H_1\)) detects loops, and the homology group in dimension 2 (\(H_2\)) detects enclosed voids. This is illustrated in Figure~\ref{fig:morsematch:e} (x-axis, from left to right: \(H_0\), \(H_1\)). The count of topological structures within each dimension is quantified by the \(p^{\text{th}}\) \textit{Betti number}, defined as the rank of the \(p^{\text{th}}\) homology group. In this study, we work with homology and persistent homology with \(\mathbb{Z}_2\) coefficients. Formal definitions of chains, cycles, boundary operators, and homology groups are omitted; see \cite{hatcher2011,Munkers84} for details.

A \textit{subcomplex} of \(\mathcal{K}\) at threshold \(t\), denoted \(\mathcal{K}_t\), consists of all cells \(\alpha \in \mathcal{K}\) with values \(f(\alpha) \leq t\). For \(t < \min_{\alpha \in \mathcal{K}} f(\alpha)\), \(\mathcal{K}_t\) is empty. As \(t\) increases, for all \(t_1 \leq t_2\) we have \(\mathcal{K}_{t_1} \subseteq \mathcal{K}_{t_2}\), forming a nested sequence of subcomplexes. Eventually, when \(t\) is at least \(\max_{\alpha \in \mathcal{K}}f(\alpha)\), the entire complex is included: \(\mathcal{K}_t = \mathcal{K}\). This nested family of subcomplexes \(\{\mathcal{K}_t\}_{t\in \mathbb{R}}\) forms a \textit{filtration}. Persistent homology \cite{Edelsbrunner2010} studies the evolution of topological features across this filtration.

Specifically, adding a \(p\)-cell \(\alpha\) to a subcomplex either creates a new \(p\)-dimensional topological structure or kills a \((p-1)\)-dimensional one \cite{Edelsbrunner2010}. Persistent homology records the \textit{birth} of a topological structure at threshold \(b\) when it first appears in the filtration and its \textit{death} at threshold \(d\) when it merges with an older structure or becomes a boundary; the pair \((b, d)\) is called a \textit{persistence pair}. The collection of all such pairs forms a \textit{persistence diagram}, a multiset of points \((b, d) \in \mathbb{R}^2\) with \(d > b\); the persistence of a feature is \(d - b\). Figure~\ref{fig:morsematch:e} illustrates a persistence diagram where a loop is born at 10 and filled at 12, yielding the pair \((10, 12)\) with persistence 2.\\

\noindent \textbf{Discrete Morse theory.} 
Introduced by Forman \cite{forman2002} in the 1990s, discrete Morse theory is a combinatorial adaptation of Morse theory \cite{forman1998} to real-valued functions defined on cell complexes.

A discrete \textit{Morse matching} on a filtered cubical complex \(\mathcal{K}\) computes a collection of disjoint pairs of cells (\(\alpha, \beta\)), where each cell appears in at most one pair. Specifically, a \(p\)-cell \(\alpha\) is matched to a \((p - 1)\)-dimensional proper face \(\beta\) of \(\alpha\), with matchings constrained to cells within the same lower star. This matching process induces a directed acyclic graph (DAG) called the \textit{matching graph} or, equivalently, the \textit{gradient vector field}.

Figures~\ref{fig:morsematch:b} and~\ref{fig:morsematch:c} depict the graphs before and after discrete Morse matching, where each node corresponds to a cell in the cubical complex \(\mathcal{K}\). Prior to matching, solid arrows point from each cell to its proper faces. Upon matching, these arrow directions invert: dashed arrows now point from cells to their proper cofaces. Unpaired cells are called \textit{critical cells}. Critical cells are highlighted in red in Figures~\ref{fig:morsematch:c} and ~\ref{fig:morsematch:d}. A \textit{\(V\)-path} is defined as a path in the matching graph that begins at a \(p\)-cell, alternates between \(p\)- and \((p - 1)\)-cells, and ends at a \((p - 1)\)-cell.

A discrete \textit{Morse complex} induced by a matching is generated by its critical cells. When working with \(\mathbb{Z}_2\) coefficients (binary arithmetic modulo 2), boundary computations reduce to parity checks: a critical \((p-1)\)-cell \(\alpha\) lies in the boundary of a critical \(p\)-cell \(\beta\) if and only if there is an \textbf{odd number of \(V\)-paths} from \(\beta\) to \(\alpha\). In Figure~\ref{fig:morsematch:c}, red arrows highlight a single \(V\)-path from a critical 2-cell to a critical 1-cell, so the 1-cell lies in the boundary of the 2-cell.

These boundary relations are encoded in a filtered \textit{boundary matrix} \(M\) with entries \(M(i, j) = 1\) iff a \((p-1)\)-dimensional critical cell \(\sigma_j\) lies in the boundary of a \(p\)-dimensional critical cell \(\sigma_i\). The discrete Morse complex inherits filtration values from its underlying critical cubical cells. After ordering \(M\)'s rows and columns by their cell filtration values, persistent homology is computed from boundary matrix reduction via Gaussian elimination over \(\mathbb{Z}_2\) \cite{Edelsbrunner2010, phat}.

For a cubical complex \(\mathcal{K}\) with \(n\) cells, its boundary matrix \(M'\) is an \(n \times n\) binary matrix with \(M'(i, j) = 1\) iff \((p-1)\)-cell \(\sigma_j\) is a face of \(p\)-cell \(\sigma_i\). Crucially, the boundary matrix of the Morse complex is typically much smaller than that of the cubical complex. This reduction in matrix size offers significant computational savings while yielding homology groups isomorphic to those of \(\mathcal{K}\), thereby preserving topological equivalence.

\section{Related Work}
\label{sec:relatedwork}

\begin{figure*}[!t]
    \centering
    \captionsetup[subfigure]{font=small}
    \subfloat[{\scriptsize Input image}\label{fig:morsematch:a}]{\includegraphics[width=0.13\textwidth]{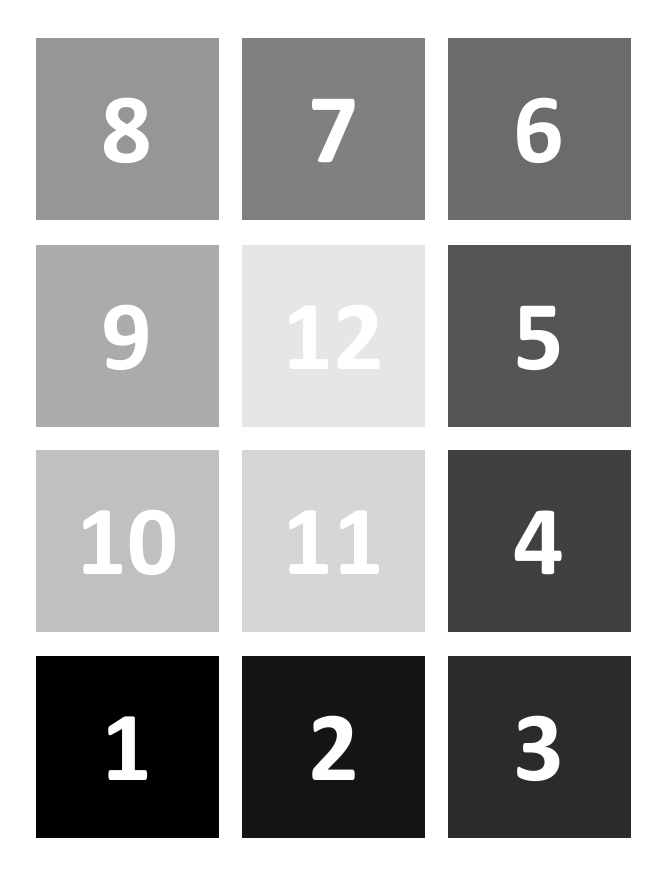}} 
    \hfill
    \subfloat[{\scriptsize Filtered cubical complex}\label{fig:morsematch:b}]{\includegraphics[width=0.20\textwidth]{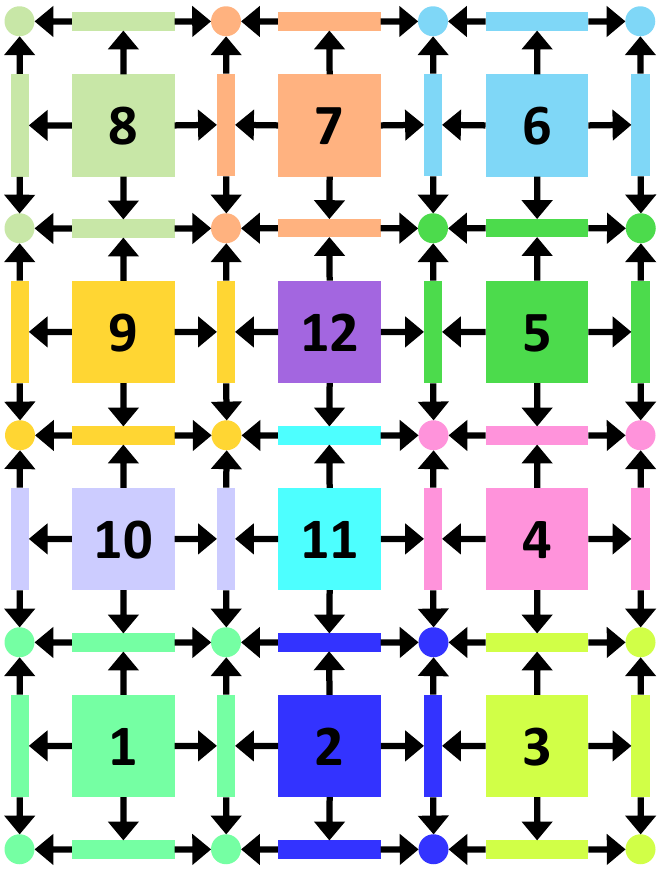}} 
    \hfill
    \subfloat[{\scriptsize Matching graph}\label{fig:morsematch:c}]{\includegraphics[width=0.20\textwidth]{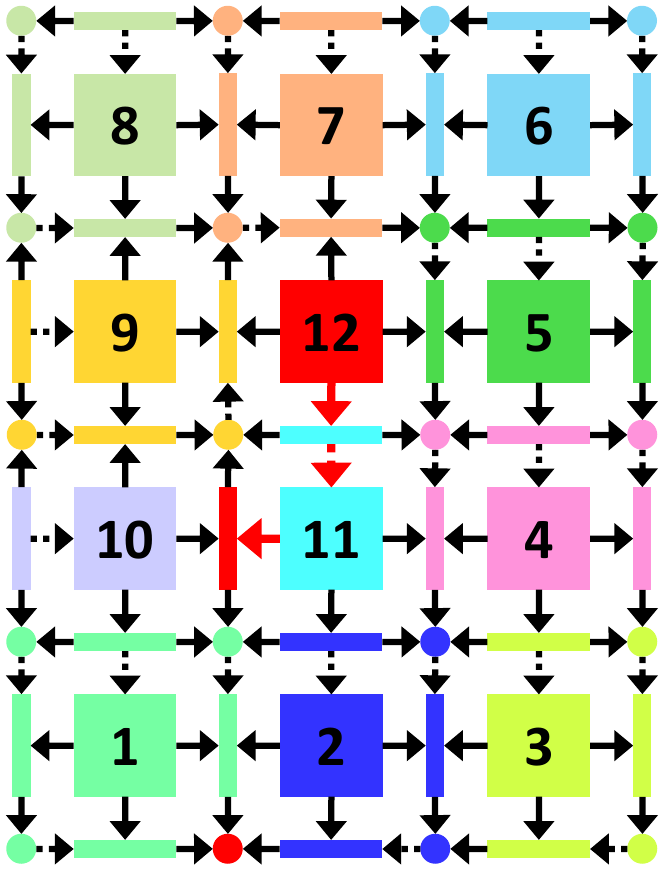}} 
    \hfill
    \subfloat[{\scriptsize Stratification-aware matching}\label{fig:morsematch:d}]{\includegraphics[width=0.21\textwidth]{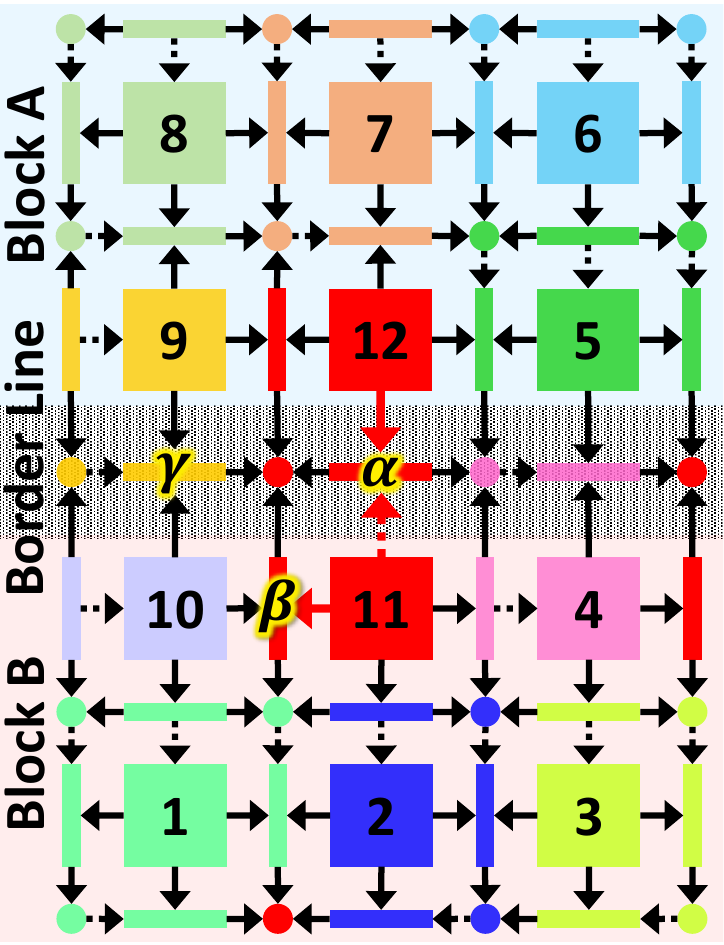}}
    \hfill
    \subfloat[{\scriptsize Persistence diagram}\label{fig:morsematch:e}]{\includegraphics[width=0.23\textwidth]{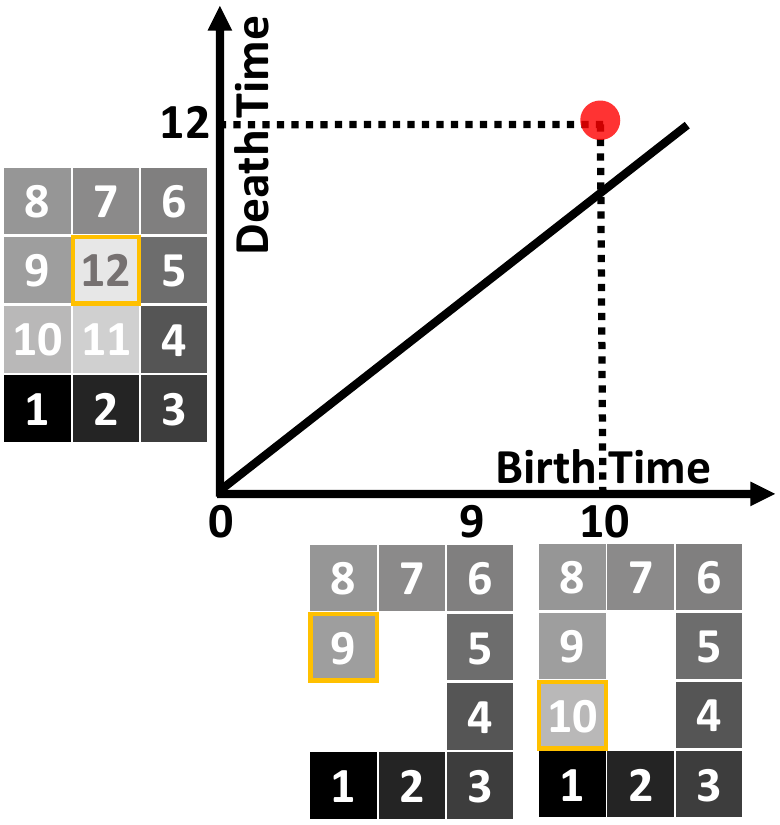}} 
    
    \caption{(a) Input image; (b) Filtered cubical complex of (a) with solid arrows initially pointing from cells to their proper faces; (c) Matching graph of (b), with dashed arrows indicating Morse matchings, critical cells in red, and red arrows indicating a \(V\)-path; (d) Stratification-aware Morse matching after splitting (a) into two blocks, with border region shaded; (e) Persistence diagram encoding boundary relations with a persistence pair (\(10, 12\)).}
    \label{fig:morsematch} 
\end{figure*}

In this section, we review related work on persistent homology and discrete Morse complex for image data. Verri et al. \cite{verri1993} pioneered the application of algebraic topology to image data via size functions. Mischaikow and Mrozek \cite{konstantin1995} broadened the scope by employing the Conley index to analyze dynamics in nonlinear systems; this framework later enabled Morse-theoretic computational methods \cite{phbook2004}.

The early 2010s witnessed significant advances in topological analysis of image data. Bendich et al. \cite{paul2010} proposed an efficient approximation scheme for persistent homology computation on 3D image data. Wagner et al. \cite{Wagner2012} developed an efficient boundary matrix generation framework for exact persistence computation on image data of arbitrary dimensions. This approach was later implemented in GUDHI \cite{gudhi} and DIPHA \cite{ulrich2014}. Despite these advances, boundary matrix construction remains a computational bottleneck, necessitating further optimization.

Lewiner et al. \cite{LEWINER2003} introduced an algorithm to build a discrete Morse function on a mesh that minimizes the number of critical cells, yielding a Morse complex for downstream homology computation. Using discrete Morse theory, Robins et al. \cite{vanessa2011-morsematch} marked a breakthrough in computing topological descriptors for image data: they constructed a Morse complex that preserves the persistent homology of the original image while being substantially smaller than the cubical complex used in persistence computations. G{\"u}nther et al. \cite{david2012} introduced a memory-efficient storage of Morse matchings that accommodates large volumes, but it remains fast primarily on relatively simple data and comes at the cost of notably increased implementation complexity. TopoGPU can handle large datasets using a streaming scheme that keeps only one chunk in device memory at a time, and it can process arbitrarily large inputs provided the resulting boundary matrix fits in host memory.

Mischaikow and Nanda \cite{konstantin2013} extended the framework of Robins et al. \cite{vanessa2011-morsematch} beyond cubical complexes and introduced the Perseus software for persistence computation across diverse data types. Their approach prioritized flexibility and broad applicability, with a trade-off in computational efficiency. Additionally, they introduced the first streaming precomputation approach, where individual level sets are stored, simplified independently, and subsequently merged, contrasting with our spatial volume decomposition strategy.

Wagner proposed Cubicle \cite{chunking}, an efficient variant of \cite{vanessa2011-morsematch} that divides large image data into slices and uses streaming preprocessing to enable scalable CPU processing. 
We adopt a more aggressive, hierarchical data partitioning strategy: the input is divided into chunks that are further subdivided into GPU blocks, enabling per-voxel processing with dedicated GPU threads. 
Both approaches can be viewed in terms of a stratified version of discrete Morse theory developed by Knudson and Wang~\cite{knudson2022discrete}, and the version for open complexes~\cite{knudson2026discrete} developed by Knudson and Scoville.

Beyond persistence computation, discrete Morse theory has enabled visualization-oriented simplifications; for example, Gyulassy et al. \cite{attila2008} processed billion-voxel volumes on standard hardware within one day. However, the resulting simplified complexes cannot be used to compute persistent homology of the original data \cite{Olaf2015}.

Bauer's Ripser \cite{ripser}, initially designed for Vietoris–Rips complexes on point cloud data, was extended to process image data via CRipser \cite{kaji2020}, a CPU implementation that serves as our primary baseline. Another popular open-source platform, the Topology ToolKit (TTK) \cite{ttk}, provides a comprehensive suite of CPU-based topological analysis and visualization algorithms widely used in scientific workflows, whereas TopoGPU focuses on GPU-accelerated persistent homology computation for image data. \texttt{pytorch-topological} \cite{rieck_pytorch_topological} is a PyTorch-based library for topological machine learning that computes persistent homology of various filtrations, including cubical complexes via a GUDHI backend, and provides differentiable topological layers and losses for deep learning. TopoGPU is distinguished from these packages by its computational model: GUDHI, DIPHA, and CRipser are CPU-based and non-streaming, while Cubicle streams on the CPU but offers no GPU parallelism and no floating-point support. In contrast, TopoGPU is the first to carry the entire precomputation, from cubical complex construction through Morse boundary derivation, on the GPU under a streaming scheme, leaving only the boundary matrix reduction on the CPU.

As modern GPUs have become more powerful, there has been growing interest in developing fast, memory-efficient parallel workflows for large-scale topological analysis. Zhang et al. \cite{zhang2020gpu} introduced Ripser++, a GPU-accelerated algorithm optimizing Vietoris-Rips computations. Wang et al. \cite{gpuecc} exploited GPU parallelism to compute cubical complexes and Euler characteristic curves on image data. gMSC \cite{Subhash2023} mitigated the path counting bottleneck in Morse-Smale complex computation via a GPU algorithm converting graph traversals into concurrently executable tasks. In this work, we tackle the path counting challenge with a parallel topological sorting method coupled with a parallel \(V\)-path parity computation algorithm. Recently, CuMPerLay \cite{cmp} introduced a CUDA implementation of cubical persistence for 2D images as a differentiable layer, computing 0D and 1D persistence via union-find, with the latter obtained through Alexander duality. This 2D-specific reduction does not extend to 3D volumes, which TopoGPU targets across all homology dimensions.

\section{Our Algorithms}
\label{sec:algo}
GPU-based persistent homology computation is constrained by per-block resource limits and the overhead of inter-block coordination. A GPU block, hereafter simply referred to as a block, is a cooperative group of threads that can synchronize and share on-chip memory. On the NVIDIA RTX 4080 used in our experiments, the per-block limits are up to 1,024 threads, 99 KB shared memory, and 64 K 32-bit registers. This limits the efficiency of single-block computation, while multiple blocks introduce expensive communication overhead. Additionally, modern datasets often exceed the memory capacity of a single GPU or CPU. 

Together, these constraints motivate a GPU algorithm capable of (1) decomposing the computational task into independent subtasks executable per GPU block, (2) eliminating communication dependencies across GPU blocks, and (3) streaming the input and aggregating results from partial outputs. Robins et al.'s \textit{ProcessLowerStars} \cite{vanessa2011-morsematch} addresses (1) and (2): it computes a matching graph independently within each voxel's lower star. This yields a computation pattern that maps naturally to GPU frameworks: one thread per voxel computes its local matching in parallel, with no dependencies across blocks. Wagner \cite{chunking} addresses (3) by partitioning the input data into chunks and aggregating each chunk's partial results into a global boundary matrix.

Building on these ideas, we present a massively parallel GPU streaming algorithm for persistent homology on 2D/3D grayscale image data that outputs persistence diagrams. The input image is streamed and processed chunk by chunk, and the computation proceeds through the following stages:

\begin{itemize}
    \item \textbf{Preprocessing} (Section~\ref{subsec:preproc}).
    A chunk is extracted from the input image and padded with collars. We launch a grid of non-overlapping GPU blocks to cover the chunk, excluding the collars. These blocks compute Morse matchings and discrete Morse boundaries independently and in parallel. Once all blocks in a chunk complete, the next chunk is loaded and processed similarly.

    \item \textbf{Morse matching computation} (Section~\ref{subsec:morsematch}).
    For each block, we construct its matching graph in two steps:
    
    \begin{itemize}
        \item \textbf{Cubical complex construction} (Section~\ref{subsubsec:cplx}).
        We first construct a filtered cubical complex. The filtered cubical complex corresponding to the input in Figure~\ref{fig:morsematch:a} is shown in Figure~\ref{fig:morsematch:b}.
        
        \item \textbf{Stratification-aware Morse matching} (Section~\ref{subsubsec:gpuborder}).
        Subsequently, following \cite{vanessa2011-morsematch}, we compute a discrete Morse matching to obtain a matching graph. To ensure correct persistent homology computation in our streaming setting, we introduce a stratification-aware Morse matching that explicitly handles matchings at block borders (Figure~\ref{fig:morsematch:d}). This approach introduces additional critical cells compared to the matching performed on the entire cubical complex (Figure~\ref{fig:morsematch:c}), which yields the optimal number of three critical cells. However, these extra critical cells are subsequently used to \emph{stitch} the results of per-block local computations.
    \end{itemize}

    \item \textbf{Discrete Morse boundary computation} (Section~\ref{subsec:morsebound}).
    With the matching graph constructed for each block, we derive the boundary relations of the resulting Morse complex by counting \(V\)-paths between critical cells. While explicitly enumerating all \(V\)-paths can be exponential in the graph size, over \(\mathbb{Z}_2\) the boundary relations depend only on the parity (odd/even) of the number of such paths, reducing the problem from exponential-time path enumeration to a linear-time parity computation. We realize this boundary derivation on the GPU via two algorithms:
    
    \begin{itemize}
        \item \textbf{Parallel topological sorting} (Section~\ref{subsubsec:tpsort}).
        We first compute a topological order of the matching graph to provide the linear order required for boundary derivation.
        
        \item \textbf{Parallel path parity computation} (Section~\ref{subsubsec:pathparity}).
         We then traverse that order with a parallel path parity algorithm that tracks cumulative path parities between critical cells, thereby deriving the boundary relations of the Morse complex.
    \end{itemize}

    Figure~\ref{fig:morsematch:c} illustrates a \(V\)-path from a critical 2-cell (value 12) to a critical 1-cell (value 10), indicating that the 1-cell lies in the boundary of the 2-cell.

    \item \textbf{Boundary matrix reduction on CPU} (Section~\ref{subsec:bndmatreduction}).
    After blocks across all chunks have been processed and the complete set of boundary relations has been finalized in host memory, critical cells are first sorted by filtration value, after which the boundary matrix is constructed and reduced to compute persistent homology. The persistence pair induced by the boundary relation in Figure~\ref{fig:morsematch:c} appears in the persistence diagram in Figure~\ref{fig:morsematch:e}.

\end{itemize}

\begin{figure*}[!t]
\centering
\includegraphics[width=6.0in]{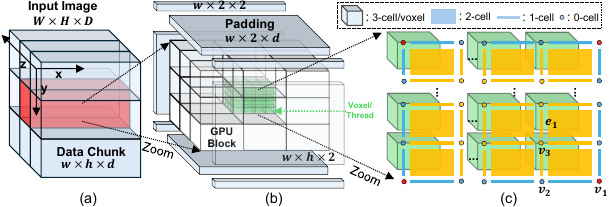}
\caption{(a) A 3D input partitioned into chunks; (b) collar-padded chunk tiled by a grid of non-overlapping GPU blocks, with one GPU thread per voxel; (c) front \(2\)-dimensional border complex of the green GPU block in (b), surrounded by 26 neighboring blocks.}
\label{fig:bordermanip}
\end{figure*}

\subsection{Preprocessing}
\label{subsec:preproc}
The input image is partitioned as follows: 2D images along the y-axis; 3D images along both y- and z-axes (see Figure~\ref{fig:bordermanip}a). A 3D image of size \((W, H, D)\) is cut \(c_y\) times along y-axis and \(c_z\) times along z-axis, yielding chunks of size \((w, h, d) = (W, H/c_y, D/c_z)\). This layout respects C++ row-major storage and maximizes contiguous memory access within each chunk.

Each chunk is padded with a two-voxel-wide collar drawn from adjacent chunks: 2D chunks of dimensions \((w, h)\) become \((w, h + 4)\), and 3D chunks of dimensions \((w, h, d)\) become \((w, h + 4, d + 4)\). Collar values are sourced from adjacent chunks when available, or set to \(+\infty\) if no adjacent chunk exists. No padding is applied along the x-axis, as data is not partitioned in this direction. Each padded chunk is loaded with the C++ Thread Pool Library (CTPL) using parallel CPU threads and stored in a single device buffer. Figure~\ref{fig:bordermanip}b shows a padded 3D chunk. The role of collars is discussed in Section~\ref{subsubsec:gpuborder}.

\subsection{Morse Matching Computation}
\label{subsec:morsematch}

\subsubsection{Cubical complex construction}
\label{subsubsec:cplx}
Each GPU block is a group of threads organized in a 2D/3D structure mirroring the spatial layout of pixels/voxels in the input data. This alignment ensures each voxel is assigned exactly one GPU thread, establishing a one-to-one correspondence between threads and data elements (see Figures~\ref{fig:bordermanip}b-c for a 3D example).

Each GPU thread processes its voxel by (i) identifying its lower star (Equation~\ref{eq:lower_star}) and (ii) computing the matching graph for the derived lower star. During lower star determination, each thread accesses a neighboring voxel by adding a fixed offset to its own coordinate and reading the corresponding buffer entry via Equation~\ref{eq:val_index}. Since the lower stars of all voxels form a disjoint partition of the cubical complex, GPU threads execute independently in parallel. Cells in a lower star inherit their filtration value from the corresponding introducing voxel. Therefore, we do not explicitly store filtration values for the cubical complex; instead, they are determined on the fly from introducing voxels during Morse matching computation. After the matching graphs for all blocks in the current chunk have been constructed, the device buffer storing the chunk values is reused to load subsequent chunks.

\subsubsection{Stratification-Aware Morse Matching}
\label{subsubsec:gpuborder}
In \cite{chunking}, the input cubical complex is partitioned along a single dimension, yielding a decomposition into two types of strata: interior strata and border strata, where each border stratum is a codimension-\(1\) cubical complex. We adopt a more general data partitioning scheme and therefore require a more refined stratification. Let \(\mathcal{K}\) be the \(D\)-dimensional cubical complex of a block of voxels. We distinguish \(D + 1\) strata, indexed by dimension \(d \in \{0, 1, \ldots, D\}\). For each \(d\), we define the \(d\)-dimensional \textit{stratum} as the collection of all cells that belong to the cubical subcomplexes of exactly \(2^{D - d}\) blocks.

The stratum of dimension \(D\) is called the interior stratum, and the strata of dimensions \((0, 1, \ldots, D - 1)\) are collectively referred to as border strata. While the strata form a disjoint decomposition of \(\mathcal{K}\), an individual \(d\)-dimensional stratum is generally not a cubical complex. We define the \(d\)-dimensional \textit{border complex} as the union of all \(k\)-dimensional strata with \(k \le d < D\). Each \(d\)-dimensional border complex is a cubical subcomplex of \(\mathcal{K}\), being closed under taking faces.

Consider the green 3D block in Figure~\ref{fig:bordermanip}b and its front \(2\)-dimensional border complex in Figure~\ref{fig:bordermanip}c. The \(0\)-dimensional border strata consist of the \(0\)-cells shared by the cubical complexes of eight blocks (red cells); the \(1\)-dimensional border strata consist of the \(0\)- and \(1\)-cells shared by the cubical complexes of four blocks (blue cells); and the \(2\)-dimensional border strata consist of the \(0\)-, \(1\)-, and \(2\)-cells shared by the cubical complexes of two blocks (yellow cells).

A border-crossing path is an alternating path that contains cells belonging to two or more interior strata. Stratification-aware Morse matching prevents such paths by introducing auxiliary cells, a subset of which are critical, thereby enabling correct aggregation of local results within the streaming framework. Recall that each chunk is covered by a grid of non-overlapping GPU blocks (Figure~\ref{fig:bordermanip}b). Since preventing alternating paths from crossing blocks also prevents them from crossing chunks, we focus on border-crossing paths across GPU blocks. Despite additional auxiliary critical cells, the resulting boundary matrix remains considerably smaller than that of the original complex.

We now introduce stratification-aware Morse matching. For a filtered cubical complex decomposed into GPU blocks, discrete Morse matchings are computed independently within each block's interior stratum and within its border strata of each dimension. Matchings are computed among cells that lie in the same lower star and the same stratum, following \textit{ProcessLowerStars}~\cite{vanessa2011-morsematch}. The union of these matchings forms the \textit{global matching}, and the corresponding filtered discrete Morse complex is called the \textit{global discrete Morse complex}. Any cell that remains unpaired after this process is critical. In particular, cells in \(0\)-dimensional border strata are always critical as there are no cells available for pairing within the same stratum. 

Under stratification-aware Morse matching, once a \(V\)-path reaches a border complex, it cannot leave; moreover, within a border complex, \(V\)-paths can only remain in the same-dimensional border stratum or move to lower-dimensional border strata. Accordingly, we restate Lemma~1 of \cite{chunking} as follows:

\begin{lemma}[Border Blocking Lemma]
\label{lemma:borderblocking}
Let \(G\) be the global matching computed on a filtered cubical complex decomposed into blocks, and let \(B\) be any border complex in this decomposition. Consider any directed path \(p = (p_1, p_2, \ldots, p_i, \ldots, p_n)\) in \(G\). If \(p_i \in B\) for some \(i\), then \(p_j \in B\) for all \(j \geq i\).
\end{lemma}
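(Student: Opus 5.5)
By induction on the path, it suffices to prove the one-step statement: if $p_i \in B$ and $(p_i,p_{i+1})$ is an edge of $G$, then $p_{i+1}\in B$. I will use two structural facts about $B$. It is a $d$-dimensional border complex, i.e.\ the union of all strata of dimension at most $d<D$. It is closed under taking faces, as noted above. Finally, each stratum is determined by the number of blocks whose cubical subcomplexes contain the cell.

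Edges of the matching graph come in two kinds, and I treat them separately.

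\emph{Downward edges (unmatched face relations).} Here $p_i$ points to a proper face $p_{i+1}\subset p_i$. Since $B$ is a cubical subcomplex, hence closed under faces, $p_{i+1}\in B$ immediately.

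\emph{Upward edges (reversed matched pairs).} Here $p_{i+1}$ is a coface of $p_i$ with $(p_i,p_{i+1})$ in the global matching. Under stratification-aware matching, pairs are formed only between cells of the same lower star and the \emph{same stratum}. So $p_{i+1}$ lies in the same stratum as $p_i$. That stratum has dimension at most $d$, because $p_i\in B$, and therefore $p_{i+1}\in B$.

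The only point needing care is that every arrow of $G$ is one of these two types. This holds because $G$ is obtained from the Hasse diagram by reversing exactly the matched pairs, and every matched pair lies within a single stratum by construction. This is the step I expect to be the main obstacle: the proof must use that matchings are computed per stratum and never pair a border cell with an interior cell, even when the two share a lower star.

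It is also worth recording, though not needed for the lemma, why the stronger remark preceding the statement holds. A face of a cell lies in at least as many block subcomplexes as the cell itself, so the stratum dimension can only weakly decrease along downward edges. It stays constant along matched edges. Hence paths move only to same- or lower-dimensional strata.
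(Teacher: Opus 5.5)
Your proof is correct and follows the paper's argument essentially verbatim. Both reduce to the one-step claim by induction on the path, and both split outgoing edges into two kinds: face edges, which stay in $B$ because $B$ is closed under faces, and reversed matched edges, which stay in $B$ because stratification-aware matching pairs only cells in the same stratum. (Your extra observation that the stratum dimension weakly decreases along face edges also correctly justifies the informal remark preceding the lemma, which the paper states without proof.)
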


\begin{proof}
We show that if $p_k \in B$ then $p_{k+1} \in B$; the claim then follows by induction from $p_i \in B$. In the global matching graph $G$, the vertex $p_k$ has two kinds of outgoing edges: (i) an edge to a proper face of $p_k$, which lies in $B$ since $B$ is a cubical complex; and (ii) a flipped edge to the proper coface with which $p_k$ is matched, which also lies in $B$ because stratification-aware matchings pair only cells within the same stratum. In either case $p_{k+1} \in B$, and by induction $p_j \in B$ for all $j \geq i$.
\end{proof}

Since matchings within each stratum are acyclic by construction and Lemma~\ref{lemma:borderblocking} forbids any cycle spanning multiple strata, the global matching is acyclic and hence a valid discrete Morse matching on the entire complex, reducing correctness to the non-partitioned case \cite{chunking}:

\begin{corollary}[Global Correctness]
The global discrete Morse complex encodes the same persistent homology up to zero-persistence pairs.
\end{corollary}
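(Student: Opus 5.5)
The plan is to reduce the corollary to the standard theorem that a discrete Morse matching compatible with a filtration induces a filtered chain complex whose persistence agrees with that of the original filtration (Robins et al.~\cite{vanessa2011-morsematch}, and Lemma~1 of \cite{chunking} in the sliced setting). That theorem needs two hypotheses: the matching is acyclic, and every matched pair lies in a single sublevel step, i.e.\ the two paired cells have the same filtration value. Once both are verified for the global matching, the conclusion follows verbatim as in the non-partitioned case.

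\emph{Filtration compatibility.} Stratification-aware matching pairs only cells lying in the same lower star $L(\beta)$. All such cells inherit the value $f(\beta)$. So each pair $(\alpha,\gamma)$ with $\alpha\subset\gamma$ has $f(\alpha)=f(\gamma)$, and each sublevel set $\mathcal{K}_t$ is a union of whole pairs plus critical cells. It remains to check that the global matching is a matching at all. The strata, and the lower stars inside them, form disjoint partitions of $\mathcal{K}$, and pairing takes place within each (stratum, lower star) piece. Hence every cell appears in at most one pair.

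\emph{Acyclicity.} I expect this step to be the main obstacle. Suppose a directed cycle $C$ exists in $G$. Pick the stratum $S$ of smallest dimension $d$ that $C$ meets.
\begin{itemize}
\item If $d<D$, let $B$ be the $d$-dimensional border complex, which is the union of the strata of dimension at most $d$. Lemma~\ref{lemma:borderblocking} says that once $C$ enters $B$ it never leaves. Since $C$ returns to every one of its vertices, all of $C$ lies in $B$, and by minimality of $d$ all of $C$ lies in $S$.
\item If $d=D$, then $C$ meets only interior strata. I must rule out $C$ moving between the interior strata of different blocks. Such a move requires an edge between cells of two different blocks' interiors. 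A face edge always goes into the closure of the current cell, and the closure of an interior cell of one block never contains an interior cell of another block. Matched edges stay within one stratum. So $C$ again lies in a single stratum.
\end{itemize}
Within a single stratum, the edges of $C$ join cells of adjacent dimension alternating between $p$ and $p-1$. Along a $V$-path the filtration value is non-increasing, because faces have value at most that of their cofaces and matched partners have equal value. A cycle therefore has constant value. It then lies inside one lower star restricted to one stratum, where the \textit{ProcessLowerStars} matching is acyclic by construction. This contradiction shows $G$ is acyclic.

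\emph{Conclusion.} With acyclicity and compatibility established, the algebraic Morse reduction (the Forman/Robins collapse argument) applies to each step of the filtration. The inclusion of the Morse complex of the critical cells of $\mathcal{K}_t$ into the cubical chain complex is a chain homotopy equivalence, and these equivalences commute with the filtration inclusions. The persistence modules are therefore isomorphic, as are their diagrams. The only discrepancy is indexing: several critical cells may share a value, which produces pairs $(b,b)$. This is exactly the ``up to zero-persistence pairs'' clause. The additional auxiliary critical cells change only the number of such cells, not the persistence module.
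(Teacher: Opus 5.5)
Your route is the paper's. The paper argues in one line that matchings within each stratum are acyclic by construction and that Lemma~\ref{lemma:borderblocking} forbids cycles spanning several strata, so the global matching is a valid acyclic Morse matching and correctness reduces to the non-partitioned case of \cite{chunking}. Your filtration-compatibility check, your separate handling of cycles confined to interior strata, and your final appeal to the Robins et al.\ reduction are sound elaborations of that sketch. The interior-strata point is that a face edge out of one block's interior can only land in that block's interior or in a border complex.

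The step that fails as written is ``A cycle therefore has constant value. It then lies inside one lower star.'' Constant $f$ does not pin down a lower star. The paper notes that voxel values tie in practice and resolves ties via $\phi$. A $V$-path can therefore move from $L(\beta_1)$ into $L(\beta_2)$ with $f(\beta_1)=f(\beta_2)$. You have not excluded a cycle running through several equal-valued lower stars, so you cannot yet invoke the acyclicity of \textit{ProcessLowerStars}.

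The repair is to track the introducing voxel $v(\alpha)$ instead of $f$, ordered lexicographically by $(f,\phi)$, which is exactly the tie-breaking that defines $L(\cdot)$.
\begin{itemize}
\item For a face $\tau\subset\sigma$, the top-dimensional cofaces of $\sigma$ are among those of $\tau$, so $v(\tau)$ is no larger than $v(\sigma)$ in this order.
\item Matched cells share their introducing voxel.
\end{itemize}
This key is therefore non-increasing along every directed path and constant on a cycle. So a cycle lies in a single lower star intersected with a single stratum, where the matching is acyclic by construction.

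With that change your argument is complete. It also supplies the justification behind the paper's phrase ``acyclic by construction.'' A single stratum's matching spans many lower stars, so that phrase itself silently relies on this monotonicity.
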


For example, the \(V\)-path indicated by the red arrows in Figure~\ref{fig:morsematch:c} becomes a border-crossing path from Block A to Block B after the input is divided into two blocks. Stratification-aware Morse matching inserts auxiliary critical cells and splits this path into three paths: one from cell 12 to \(\alpha\) in Block A, and two from cell 11 to \(\alpha\) and \(\beta\) respectively in Block B (Figure~\ref{fig:morsematch:d}). Thus, \(\alpha\) lies in the boundaries of cells 12 and 11, while \(\beta\) is in the boundary of cell 11. During \(\mathbb{Z}_2\) boundary matrix reduction, adding the column of cell 11 to that of cell 12 cancels \(\alpha\) and introduces \(\beta\) into the boundary of 12, preserving the original persistence pair \((10, 12)\). The introduced auxiliary critical cells contribute only zero-persistence pairs.

Since matchings are restricted to cells in the same stratum and the same lower star, cells within a given stratum may be processed in lower stars associated with different blocks. Consider the shaded \(1\)-dimensional border complex in Figure~\ref{fig:morsematch:d}. \(\alpha\) is introduced by cell 11 and processed in Block B, whereas \(\gamma\) is introduced by cell 9 and processed in Block A. This poses no issues when Blocks A and B lie within the same chunk, since all blocks in a chunk collectively operate on a shared matching graph and have access to the matching graph for the entire chunk.

The situation differs when Blocks A and B belong to different chunks: \(\gamma\)'s matchings are computed in Block A and are missing from Block B's chunk, while \(\alpha\)'s matchings are computed in Block B and are missing from Block A's chunk, causing the matching graphs of the two adjacent chunks to disagree on the border strata. To address this, we extend the chunk containing Block A with a two-voxel-wide collar into the neighboring chunk containing Block B, so that Block A can compute \(\alpha\)'s matchings locally within its chunk. The same mechanism applies symmetrically to the chunk containing Block B. These collars ensure that the matching graphs of adjacent chunks induce consistent matchings on border complexes. By \cite{vanessa2011-morsematch}, determining the lower star of a voxel requires its 8-connected neighbors in 2D, or 26-connected neighbors in 3D; resolving a border cell's containing lower star thus reaches one voxel across the chunk border to the introducing voxel and one voxel beyond it, so a collar width of two is the minimum required. Accordingly, each chunk is padded with a two-voxel-wide collar on both sides of every cutting direction.

We do not launch additional blocks or threads to process the collar voxels. Instead, we repurpose the interior threads nearest the chunk border to process them once they finish their own, following the same stratification-aware Morse matching procedure. Since these threads already reside next to the border, the collar values they require are fixed-offset neighbors read directly from the device buffer.

The Morse matching computation outputs a graph \(G = (V, E)\), along with the list of critical cells \(\mathcal{C}\) and their filtration values, where \(V\) is the set of cell indices of the filtered cubical complex and \(E\) is the set of directed edges of the matching graph. We store \(G\) in a cube-map format \cite{Wagner2012}, whose entries index the cells in \(V\) and encode the incident edges in \(E\); cell indices are computed on the fly from cell coordinates in this grid. The cube-map format is a regular grid-like layout that stores one entry for every vertex, edge, face, and voxel of the cubical complex induced by the input grid. For a 3D chunk of size \((w, h, d)\), the cube-map is an array of size \((2w + 1, 2h + 1, 2d + 1)\), where the cell dimension of an entry with index \(\mathbf{i}=(i_1,i_2,i_3)\) is given by \(\dim(\mathbf{i})=\sum_{k=1}^{3}(i_k \bmod 2)\). Figure~\ref{fig:morsematch:b} illustrates the cube-map format induced by the 2D input in Figure~\ref{fig:morsematch:a}. Each entry in \(E\) occupies 12 bits, partitioned into six consecutive 2-bit segments representing front, back, left, right, top, and bottom directions. Each segment encodes one of three matching states: incoming edge, outgoing edge, or disconnected (used in GPU topological sorting).

\subsection{Discrete Morse Boundary Computation}
\label{subsec:morsebound}

\begin{figure*}[!t]
    \captionsetup[subfigure]{font=small}
    \centering
    \subfloat[\centering \(d^-(v)=0\), \(pre(v)=v\) \label{fig:toposortintuit:a}]{\includegraphics[width=0.12\textwidth]{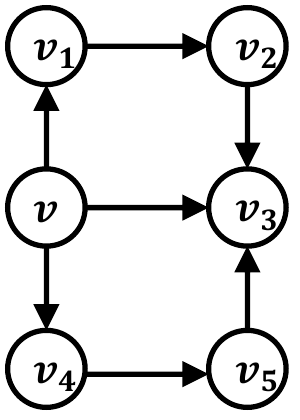}} 
    \hspace{1cm}
    \subfloat[\centering \(d^-(v)=1\), \(pre(v)=v_1\) \label{fig:toposortintuit:b}]{\includegraphics[width=0.12\textwidth]{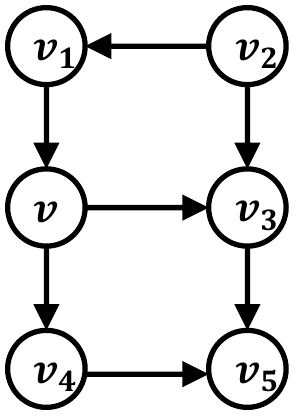}} 
    \hspace{1cm}
    \subfloat[\centering \(d^-(v)=3\), \(pre(v)=v_1\) \label{fig:toposortintuit:c}]{\includegraphics[width=0.12\textwidth]{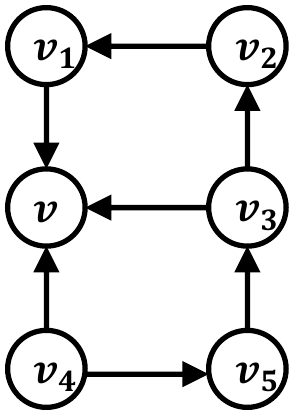}} 
    \hspace{1cm}
    \subfloat[\centering \(d^-(v)=2\), \(pre(v)=\text{\textup{\textsc{nil}}}\) \label{fig:toposortintuit:d}]{\includegraphics[width=0.12\textwidth]{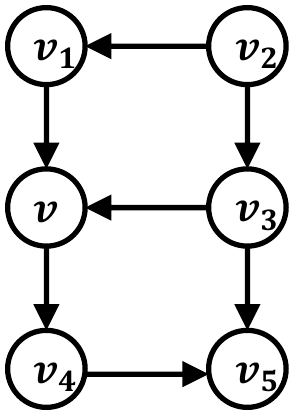}}
    \hspace{1cm}
    \subfloat[\centering \(d^-(v)=2\), \(pre(v)=\text{\textup{\textsc{nil}}}\) \label{fig:toposortintuit:e}]{\includegraphics[width=0.12\textwidth]{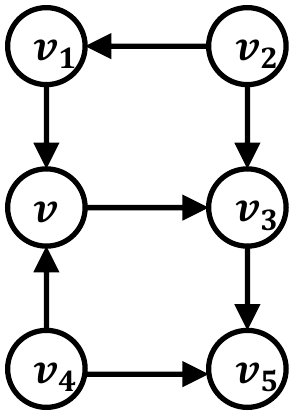}} 
    
    \caption{Examples of 2D local-dependency configurations for vertex \(v\) within its 8-connected neighborhood; for clarity, only the right half is shown.}
    \label{fig:toposortintuit} 
\end{figure*}

The discrete Morse boundary of a critical cell \(\alpha\) consists of all critical cells \(\beta\) connected to \(\alpha\) by an \textit{odd number} of \(V\)-paths in the matching graph. This is a special case of Forman's discrete Morse boundary operator \cite{forman1998} under $\mathbb{Z}_2$ coefficients. A direct computation of this boundary is prohibitive as the number of \(V\)-paths may grow exponentially with the size of the cubical complex, making exhaustive enumeration infeasible \cite{konstantin2013}.

Exploiting the directed acyclic graph (DAG) structure of matching graphs, D{\l}otko and Wagner introduced a sequential method~\cite{hw2012-phcpu} that computes Morse boundaries through two stages: (i) topological sorting of the graph, followed by (ii) \(V\)-path counting between critical cells along the topological order. However, both stages are inherently sequential and pose significant challenges for GPU parallelization. To tackle this, we introduce a parallel topological sorting algorithm and a parallel \(V\)-path parity computation algorithm.

\subsubsection{Parallel Topological Sorting}
\label{subsubsec:tpsort}
Topological sorting of a DAG seeks a linear ordering of its vertices such that for every directed edge \(u \to v\), vertex \(u\) precedes \(v\) in the ordering. This ordering can be visualized as arranging the vertices along a line so that all edges point from left to right. Every DAG admits at least one such ordering \cite{khan1962,cormen2009introduction}. Topological sorting has broad applications, including dependency resolution and task scheduling.

Kahn's classic topological sorting algorithm~\cite{khan1962} iteratively removes zero-in-degree vertices. Ma et al. introduced a parallel variant based on computing the graph's transitive closure \cite{ma1997}, and Er proposed a parallelization strategy that propagates node values from sources to sinks \cite{Er1983}. Saxena et al. devised a hybrid scheme that traces leaf-to-junction paths on GPUs and junction-to-junction paths on CPUs, whose parallel efficiency degrades rapidly as the number of branching nodes increases \cite{rahul2018}. Svantesson and Eklund's GPU adaptation of Kahn's algorithm is likewise constrained by the number of zero-in-degree nodes \cite{david2016}. These approaches fail to harness modern GPUs' massively parallel capabilities because per-iteration parallelism is bounded by the zero-in-degree frontier and the iteration count is at least the DAG height; in the worst-case of a long, narrow DAG, only one vertex is eligible per iteration, collapsing GPU occupancy while synchronization and memory traffic dominate.

We introduce a parallel topological sorting algorithm (Algorithm~\ref{alg:toposort}) that assigns one GPU thread per vertex. Each thread iteratively performs three steps: \textbf{Step I} identifies the predecessor of its assigned vertex via local dependency analysis, \textbf{Step II} traces backward from that vertex to a source vertex, and \textbf{Step III} assigns topological orders using a histogram-based parallel prefix-sum. As orders are committed, edges are removed, and iterations continue until all vertices are ordered. In this paper, we restrict our discussion to matching graphs (DAGs) derived from Morse matching on 2D/3D image data, where vertices have degree four in 2D (up, left, right, down) and degree six in 3D (adding forward and backward). 

\begin{algorithm}[t]
\caption{Parallel Topological Sorting}
\label{alg:toposort}
\DontPrintSemicolon
\KwIn{A DAG $G=(V,E)$ with bounded degree}
\KwOut{Topological index $\tau(v)$ for each $v\in V$}
Initialize $\tau(v)\leftarrow \text{\textup{\textsc{nil}}}$, $pre(v)\leftarrow \text{\textup{\textsc{nil}}}$ for all $v\in V$\;
Initialize $src(v)\leftarrow v$, $dist(v)\leftarrow 0$ for all $v\in V$\;
Initialize $hist$ as an empty hop count histogram\;
Initialize $U\leftarrow V$ as the set of unordered vertices\;
Initialize $F\leftarrow E$ as the set of remaining edges\;
Initialize $offset\leftarrow 0$\;
\While{$U\neq\emptyset$}{
\StepTcp{Step I: Predecessor identification}
\ForParEach{$v\in U$ with $pre(v)=\text{\textup{\textsc{nil}}}$}{
    Identify $v$'s predecessor $pre(v)$ in $(U,F)$\;
}
\StepTcp{Step II: Tracing to a source vertex}
\Repeat{all $pre(src(v))$ have reached a source or $\text{\textup{\textsc{nil}}}$}{
    \ForParEach{$v\in U$ with $pre(src(v))\notin\{\text{\textup{\textsc{nil}}},src(v)\}$}{
        $u\leftarrow pre(src(v))$\;
        $src(v)\leftarrow src(u)$\;
        $dist(v)\leftarrow dist(v)+1+dist(u)$\;
    }
}
\ForParEach{$v\in U$ with $pre(src(v))=src(v)$}{
    Atomically increment $hist[dist(v)]$\;
    $F\leftarrow F\setminus\{(x,y)\in F:x=v\}$\;
}
\StepTcp{Step III: Topological order assignment}
Compute parallel prefix-sum over $hist$\;
$offset^{'}\leftarrow offset$\;
$offset \leftarrow offset + hist[\text{last nonempty bin}]$\;
\ForParEach{$v\!\in\!U$ with $pre(src(v))=src(v)$}{
    Atomically set
    $\begin{cases}
      \tau(v)\leftarrow hist[dist(v)]-1+offset'\\
      hist[dist(v)]\leftarrow hist[dist(v)]-1
    \end{cases}$\;
    $U\leftarrow U\setminus\{v\}$\;
}
}
\end{algorithm}

\textbf{Step I}: Predecessor identification.
For a vertex \(v\), we identify its predecessor \(pre(v) = u\) within its 8-connectivity (2D) or 26-connectivity (3D) neighborhood such that, once \(u\) is placed in the topological order, all dependencies of \(v\) are satisfied and so \(v\) need only appear after \(u\). \(pre(v)\) is identified by examining \(v\)'s in-degree \(d^-(v)\) and its local dependency configuration:
\begin{enumerate}
    \item $d^-(v)=0$ (Figure~\ref{fig:toposortintuit:a}): \(v\) has no dependencies and is therefore a source vertex with $pre(v) = v$.
    \item $d^-(v)=1$ (Figure~\ref{fig:toposortintuit:b}): \(v\) has a single dependency from $v_1$, so $pre(v) = v_1$.
    \item $d^-(v) > 1$ with identifiable predecessor (Figure~\ref{fig:toposortintuit:c}): in the lower half of the graph, \(v\) depends on \(v_3\) and \(v_4\) with \(v_3 \leftarrow v_5 \leftarrow v_4\); in the upper half, \(v\) depends on \(v_1\) and \(v_3\) with \(v_1 \leftarrow v_2 \leftarrow v_3\). Since \(v_1\) transitively precedes \(v_3\) and \(v_4\), satisfying \(v_1\) satisfies all \(v\)'s dependencies, hence \(pre(v) = v_1\).
    \item $d^-(v) > 1$ without identifiable predecessor: when \(v\)'s dependencies lack transitive relationships (e.g., \(v\)'s dependencies from \(v_1\) and \(v_3\) in Figure~\ref{fig:toposortintuit:d}) or their relation is undetermined (e.g., \(v\)'s dependencies from \(v_1\) and \(v_4\) in Figure~\ref{fig:toposortintuit:e}), the predecessor of \(v\) cannot be identified, and $pre(v) = \text{\textsc{nil}}$.
\end{enumerate}

\begin{figure*}[!t]
\centering
\includegraphics[width=7.1in]{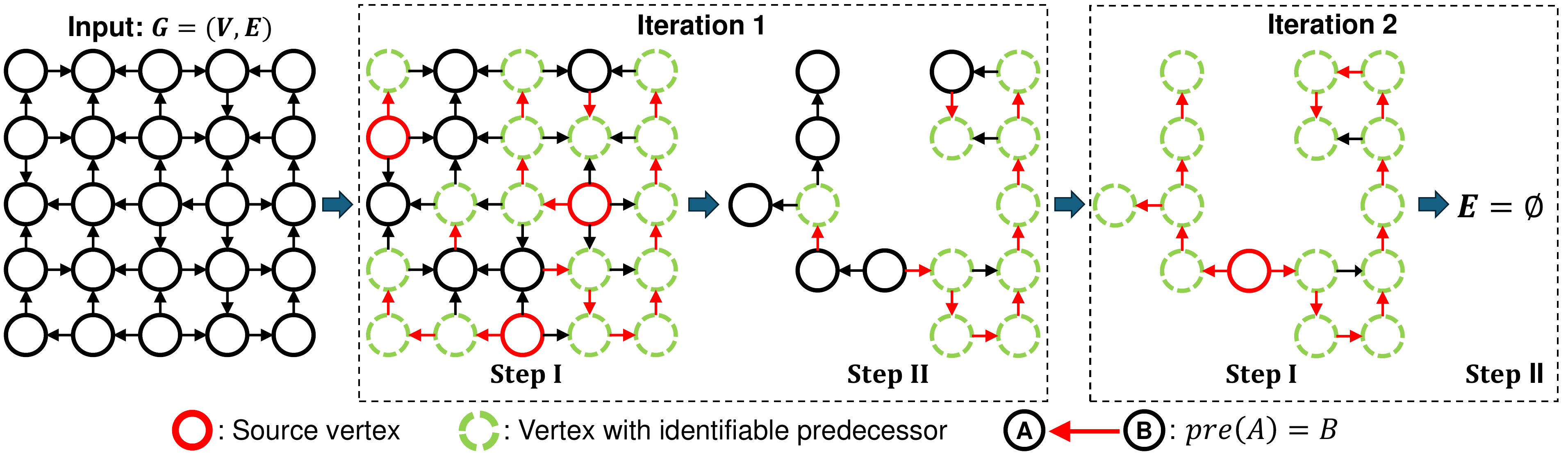}%
\caption{Illustration of Algorithm~\ref{alg:toposort} for parallel topological sorting. \textbf{Step I}: identify predecessors. \textbf{Step II}: trace from green vertices (those with identified predecessors) backward along red arrows (\(pre(\cdot)\)) to red vertices (source vertices), deleting the outgoing edges of each traceable vertex; for clarity, traceable vertices are also removed from the illustration.}
\label{fig:toposortillus}
\end{figure*}

\textbf{Step II}: Tracing to a source vertex.
After identifying the predecessor for vertex \(v\), we trace recursively from \(v\) through \(pre(\cdot)\) until reaching either a source vertex (\( d^-(\cdot) = 0 \)) or a vertex without identifiable predecessor (\(pre(\cdot) =\text{\textsc{nil}}\)). We record in \(src(v)\) the furthest vertex reached in a trace, track in \(dist(v)\) the hop count from \(v\) to \(src(v)\), and maintain a per-iteration histogram \(hist\) whose \(k\)-th bin counts vertices with \(dist(\cdot) = k\). In parallel programming, concurrent updates to shared data by multiple threads can cause data races. We prevent these by using \textit{atomic operations}, which ensure exclusive, uninterrupted access to data by one thread at a time.

\begin{enumerate}
\item Tracing success: If the trace from \(v\) reaches a source vertex, \(v\) is a traceable vertex; we remove all its outgoing edges from the remaining edge set and atomically increment \(hist[dist(v)]\) for hop count \(dist(v)\).

\item Tracing failure: If the trace stops at a vertex with \(pre(\cdot) = \text{\textsc{nil}}\), \(src(v)\) stores the furthest vertex reached in this trace and \(dist(v)\) records the hop count from \(v\) to \(src(v)\). In later iterations, tracing for \(v\) resumes from \(src(v)\), preventing redundant retracing from \(v\).
\end{enumerate}

Figure~\ref{fig:toposortillus} illustrates Algorithm~\ref{alg:toposort} with \textbf{Step III} omitted for clarity. Each vertex of the input DAG \(G\) is handled by a dedicated GPU thread, which independently identifies the vertex's predecessor in \textbf{Step I} and removes its outgoing edges in \textbf{Step II} provided the vertex is traceable. As edges are iteratively removed, remaining vertices’ in-degrees decrease, so vertices with previously unidentifiable predecessors or untraceable paths become processable in subsequent iterations, until no edges remain and all vertices are ordered.

\textbf{Step III}: Topological order assignment.
The inclusive prefix-sum (also known as prefix-scan) transforms an array \(A\) into an array \(B\) of identical length, where each element \( B[i] = \sum_{j=0}^i A[j] \) aggregates the subarray \( A[0\ldots i] \). We adopt the efficient CUDA implementation by Harris et al. \cite{prefix2007} for parallel prefix-sum computation.

A subset of vertices in a DAG forms a \textit{chain} if every pair of vertices is comparable: for any \(u, v\) in the subset, there exists a directed path either from \(u\) to \(v\) or from \(v\) to \(u\). Any valid topological order must preserve the sequence of each chain, though chains need not occupy contiguous positions.

Each trace through the predecessor function \(pre(\cdot)\) to a source vertex forms such a chain where if \(u\) precedes \(v\) in the chain, \(u\) must precede \(v\) in the topological order. We preserve chain sequences by arranging vertices in increasing order of their hop count from their respective source vertices, with vertices sharing identical hop count ordered arbitrarily. This is achieved via inclusive prefix-sum over the per-iteration hop count histogram \(hist\), where topological indices for vertices with hop count \(d\) occupy the range \([hist[d - 1], hist[d])\) for \( d > 0 \), or \([0, hist[0])\) for \( d = 0 \). The global topological order is constructed by appending the vertices ordered in the current iteration to those ordered in prior iterations.

\textbf{Complexity analysis}.
Let \(G=(V,E)\) be a DAG and let \(N = |V|\). Since \(G\) is induced by Morse matchings on 2D/3D image data, every vertex has bounded degree: four in 2D and six in 3D. Hence, \textbf{Step I} performs only local checks in \(\mathcal{O}(1)\) parallel time per iteration.

Let \(r\) be the number of iterations and \(D_t\) be the maximum predecessor-tracing depth among all threads in iteration \(t\). In \textbf{Step II}, synchronized pointer jumping traces a predecessor chain of depth \(D_t\) in \(\lceil \log_2(D_t + 1) \rceil = \Theta(\log(D_t+1))\) parallel rounds. The parallel prefix-sum in \textbf{Step III} has the same asymptotic cost. Since \textbf{Step I} performs constant-time local checks, every iteration costs \(\Theta(1)\) even when \(D_t = 0\). Therefore, iteration \(t\) takes \(\Theta\bigl(1 + \log(D_t + 1)\bigr)\), and the total parallel time is
$$
T = \Theta\!\left(\sum_{t=1}^{r}\bigl(1 + \log(D_t + 1)\bigr)\right).
$$
This bound assumes sufficient thread parallelism and counts synchronization and atomic updates in the standard parallel-time model.

Now let \(m_t\) denote the number of vertices ordered in iteration \(t\) (the \(hist[\text{last nonempty bin}]\) value in Algorithm~\ref{alg:toposort}). Any successful trace of depth \(D_t\) contains \(D_t+1\) vertices, so \(D_t+1 \le m_t\). Since \(\log x \le x\) for \(x \ge 1\), the cost of iteration \(t\) is \(\mathcal{O}(m_t)\). As \(\sum_{t=1}^{r} m_t = N\), we get the general upper bound \(T = \mathcal{O}(N).\) The corresponding general lower bound is \(\Omega(1)\), since any nonempty input requires at least one parallel step.

The complexity, however, depends on how the number of iterations \(r\) and the maximum predecessor-tracing depth \(D_t\) trade off, illustrated by three representative regimes in Figure~\ref{fig:toposortcomplexity}.

\begin{itemize}
    \item \textbf{Large trace depth.}
    Figure~\ref{fig:toposortcomplexity:a} shows a DAG with a single source vertex at the top-left corner, where all vertices trace back to the same source in one iteration. The maximum trace depth is \(D_1=\Theta(N)\), giving \(T=\Theta(\log N).\)

    \item \textbf{Constant depth.}
    Figure~\ref{fig:toposortcomplexity:b} shows a DAG constructed via bipartite orientation: partition vertices into sets \( A = \{ v_{ij} \mid (i+j) \equiv 0 \pmod{2} \} \) and \( B = \{ v_{ij} \mid (i+j) \equiv 1 \pmod{2} \} \), with all edges directed from \( A \) to \( B \). Every vertex is either an immediate source or becomes one after one iteration, so \(r=2\) with \(D_t=\Theta(1)\), giving \(T=\Theta(1).\)

    \item \textbf{Large iteration.}
    Figure~\ref{fig:toposortcomplexity:c} shows a square DAG with a single source vertex at the top-left corner with edge oriented along increasing Manhattan distance from the source (i.e., edges point away from the source). In each iteration, the algorithm removes one boundary layer of the remaining square, so \(r=\Theta(\sqrt{N})\). In iteration \(t\), the longest trace has depth \(D_t = \Theta(\sqrt{N}-t+1)\). Therefore, \(T=\Theta\left(\sum_{t=1}^{\Theta(\sqrt{N})}\left(1+\log(D_t+1)\right)\right)=\Theta(\sqrt{N}\log N).\)
\end{itemize}

These cases show that the running time is governed by the trade-off between \(r\) and \(D_t\): large \(D_t\) can still be efficient when the iteration count is small, while small \(D_t\) may still be costly when many iterations are required.

Algorithm~\ref{alg:toposort} decomposes sequential topological sorting into independent per-vertex tasks executed in parallel. GPU threads operate concurrently within each step, with synchronization via threads barrier after each step.

\begin{figure}[!t]
    \centering
    \captionsetup[subfigure]{font=scriptsize}
    \subfloat[Large trace depth \label{fig:toposortcomplexity:a}]
    {\includegraphics[width=0.13\textwidth]{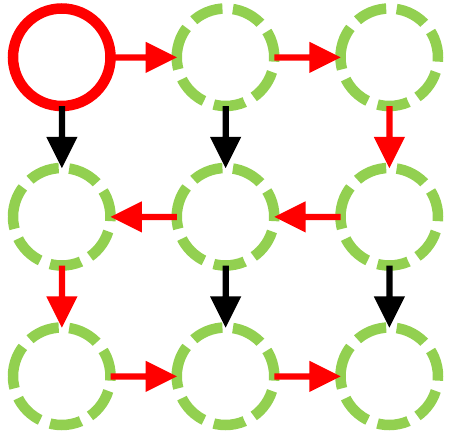}} 
    \hspace{0.5cm}
    \subfloat[Constant depth \label{fig:toposortcomplexity:b}]
    {\includegraphics[width=0.13\textwidth]{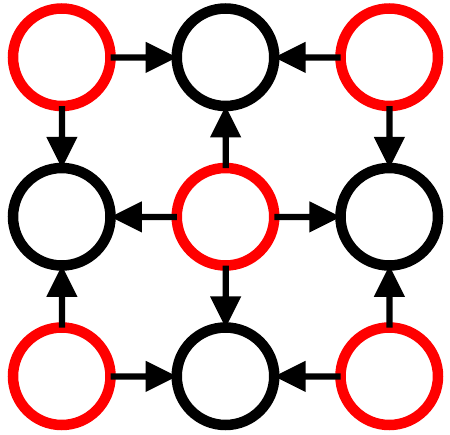}}
    \hspace{0.5cm}
    \subfloat[Large iteration \label{fig:toposortcomplexity:c}]
    {\includegraphics[width=0.13\textwidth]{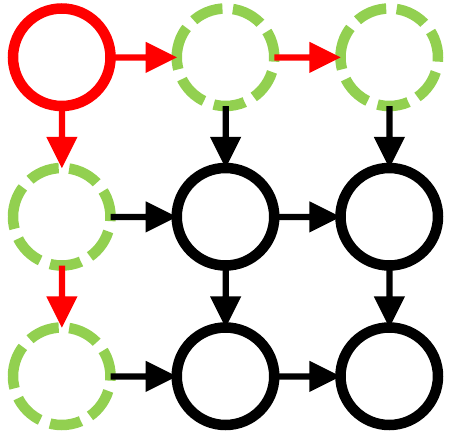}}

    \caption{DAG configurations illustrating three representative regimes for parallel topological sorting: large trace depth, constant depth, and large iteration count; legend as in Figure~\ref{fig:toposortillus}.}
    \label{fig:toposortcomplexity} 
\end{figure}

\subsubsection{Parallel Path Parity Computation}
\label{subsubsec:pathparity}
D{\l}otko and Wagner's Algorithm~4 \cite{hw2012-phcpu} computes Morse boundaries by propagating \(V\)-path counts from each critical cell \(u\) to downstream cells in topological order, adding a critical cell \(v\) to \(u\)'s boundary if an odd number of paths exist from \(u\) to \(v\). For a matching graph \( G = (V, E) \), this approach exhibits a worst-case time complexity of \( \mathcal{O}(c \cdot (|V| + |E|)) \), where \(c\) is the number of critical cells. We reduce path counting to path parity computation and present a parallel algorithm traversing in topological order to track cumulative path parities between critical cells.

We maintain an array \(P\) of length \(|V|\) indexed by the topological order \(\tau(v_i) = i\); each entry \(P[i]\) is itself a subarray of \(T\) 32-bit words, \(P[i][0], \dots, P[i][T-1]\), one word per GPU thread, where \(T\) is  the number of GPU threads. This allows the system to concurrently track up to \(32 \cdot T\) critical cells associated with \(v_i\). Each thread \(q\) is assigned a batch \(\mathcal{B}_q\) of at most 32 critical cells drawn from the critical cell list \(\mathcal{C}\), and the \(k\)-th bit of \(P[i][q]\) stores the cumulative path parity from the \(k\)-th cell of \(\mathcal{B}_q\) to \(v_i\). Encoding each path parity as one bit allows a single word access by a thread to update parities for 32 cell pairs simultaneously. Processing follows topological order \(\tau\), with each thread \(q\) executing:
$$
\forall\, (v_j,v_i)\in E:\; P[i][q] \leftarrow P[i][q] \oplus P[j][q],
$$
where \(\oplus\) denotes bitwise XOR. A critical cell \(v\) lies in the boundary of a critical cell \(u\) (i.e., \(\sigma(u, v) = 1\)) iff the path parity from \(u\) to \(v\) is odd. See Algorithm~\ref{alg:pathcount} for details. 

Assuming \(c \leq 32 \cdot T\), this algorithm demonstrates a time complexity of \(\mathcal{O}(|V| + |E|)\). With \(T \leq 1024\) threads per GPU block and a 1024-thread block covering at most \(12{,}825\) cubical complex cells (CUDA Programming Guide \cite{cuda_programguide}), we have \(c \leq 12{,}825 \leq 32 \cdot 1024 = 32{,}768\); hence the condition \(c \leq 32 \cdot T\) holds in practice.

\textit{GPU global memory coalescing} is a technique that combines memory accesses from a warp (32 GPU threads executing the same instruction simultaneously) into a single transaction to maximize data throughput when threads in a warp access contiguous, aligned memory addresses. We choose \(T\) as a multiple of 32 so each warp operates on a contiguous block of thirty-two 32-bit words, yielding fully coalesced reads and writes.

\begin{algorithm}[t]
\caption{Parallel Path Parity Computation}
\label{alg:pathcount}
\DontPrintSemicolon
\KwIn{A DAG $G=(V,E)$}
\KwIn{Topological order $\tau:V\to\{0,\ldots,|V|-1\}$}
\KwIn{Critical cell list $\mathcal{C}\subseteq V$}
\KwIn{Each of the T GPU threads $q$ is assigned a batch $\mathcal{B}_q\subseteq\mathcal{C}$ of at most $32$ critical cells}
\KwOut{Boundary relations $\sigma(u,v)\!\in\!\{0,1\}$ for $u,\!v\!\in\!\mathcal{C}$}

Initialize $\sigma(\cdot,\cdot)\leftarrow 0$\;
Initialize $P[i][q]\leftarrow 0$ for all $i,q$\;

\ForParEach{GPU thread $q$}{

\tcp{Initialize path parity tracker array}
\ForEach{vertex $v_i$ in topological order}{
    \If{$v_i\in\mathcal{B}_q$}{
        Set bit in $P[i][q]$ corresponding to $v_i$ to $1$\;
    }
}

\tcp{Propagate path parities}
\ForEach{vertex $v_i$ in topological order}{
    \ForEach{edge $(v_j,v_i)\in E$}{
        $P[i][q]\leftarrow P[i][q]\oplus P[j][q]$\;
    }

    \If{$v_i\in\mathcal{C}$}{
        \ForEach{$v_j\in\mathcal{B}_q$}{
            \If{bit in $P[i][q]$ corresponding to $v_j$ is $1$}{
                \If{$\dim(v_j)-\dim(v_i)=1$}{
                    $\sigma(v_j,v_i)\leftarrow 1$\;
                }
            }
        }
    }
}
}
\end{algorithm}

\subsection{Boundary Matrix Reduction on CPU}
\label{subsec:bndmatreduction}
After all chunks have been processed, the complete set of boundary relations is finalized in the host's RAM, thereby avoiding GPU memory capacity limits.

Critical cells are sorted in non-decreasing order by their filtration values using the CUDA Thrust library. Boundary relations \(\sigma(\cdot, \cdot)\) are encoded in a binary boundary matrix over \(
\mathbb{Z}_2\), with columns corresponding to critical cells (ordered by filtration values) and rows indicating their boundary relations. The size of the boundary matrix is quantified as the number of ones in the matrix. We apply the ``twist'' algorithm \cite{twist} implemented in the Persistent Homology Algorithm Toolbox (PHAT) \cite{phat} for boundary matrix reduction. Construction of boundary matrix and extraction of persistence pairs are parallelized with CTPL.

\section{Experiments}
\label{sec:experiments}

\noindent \textbf{Experimental Setup.} Experiments were conducted on a Windows 11 system equipped with an Intel Core i5-13600K CPU (14 cores), 64 GB RAM, and an NVIDIA RTX 4080 GPU with 9,728 CUDA cores and 16 GB GDDR6 memory. The algorithm is implemented in CUDA 12.7 and C++17, with data residing on a 4TB Seagate hard disk drive operating at 5400 RPM. CUDA kernel launch parameters are selected at runtime using CUDA's occupancy-based launch configuration API, \texttt{cudaOccupancyMaxPotentialBlockSize}, which determines the block size that maximizes potential occupancy for the target kernel on the given GPU. The final grid size and dynamically allocated shared memory size are then computed from the input size and the selected block size.

\noindent \textbf{Datasets.}
\begin{itemize}
    \item \textbf{Volumetric Simulation and Imaging}~\cite{scivisdata}: 28 scientific visualization volumes, of which we report a representative subset of 22 in the main paper; the full evaluation is provided in Table~S.I of the supplementary material.
    \item \textbf{Virtual Imaging Clinical Trials for Regulatory Evaluation (VICTRE)} ~\cite{VICTRE01}: Simulated breast phantoms for four tissue scenarios (Dense, Fatty, Heterogeneous, Scattered), with 5 MRI volumes synthesized per scenario.
    \item \textbf{Gaussian Random Fields (GRF)}: \(512^3\)-resolution fields generated with varying spectral index \(\alpha\) (10 volumes per \(\alpha\)). Smaller \(\alpha\) values produce finer-scale fluctuations.
\end{itemize}

\noindent \textbf{Baselines.}
\begin{itemize}
    \item \textbf{DIPHA} \cite{ulrich2014}: A parallel algorithm using the Message Passing Interface (MPI) for persistent homology computation. It exhibits a substantial memory footprint. Timings are reported for parallel execution on a single machine.
    \item \textbf{GUDHI} \cite{gudhi}: A comprehensive topological data analysis library prioritizing usability and seamless integration into broader workflows.
    \item \textbf{CRipser} \cite{kaji2020}: A widely recognized, specialized adaptation of Ripser \cite{ripser} for cubical complexes. For comparison, we used its C++ implementation, which ran faster than the Python version.
    \item \textbf{Cubicle} \cite{chunking}: A multi-core parallel algorithm for persistent homology computation, employing an external-memory approach to handle large datasets. It lacks support for floating-point inputs.
\end{itemize}

\setlength{\tabcolsep}{5.0pt}
\begin{table*}[!htb]
\centering
\begin{threeparttable}
\caption{End-to-end runtime for persistent homology computation: TopoGPU vs.\ state-of-the-art baselines.}

\begin{tabular}{@{}cccccccccc@{}}
\hline
\multirow{2}{*}{Dataset} 
& \multicolumn{1}{c}{Avg. \#Persist.} 
& \multicolumn{1}{c}{\textbf{TopoGPU}} 
& \multicolumn{1}{c}{Cubicle} 
& \multicolumn{1}{c}{CRipser} 
& \multicolumn{1}{c}{DIPHA} 
& \multicolumn{1}{c}{GUDHI} 
& \multicolumn{1}{c}{Topo/CRipser} 
& \multicolumn{1}{c}{Topo/best} 
& \multicolumn{1}{c}{TopoGPU} \\
& Pairs 
& (secs) 
& (secs) 
& \multicolumn{1}{c}{(secs)} 
& (secs) 
& (secs) 
& \multicolumn{1}{c}{Speedup} 
& \multicolumn{1}{c}{Speedup\tnotea} 
& \multicolumn{1}{c}{MVox/s\tnoteb} \\ \hline

Silicium & 479 & 0.02\text{\scriptsize$\pm$0.00} & 0.62\text{\scriptsize$\pm$0.00} & 0.11\text{\scriptsize$\pm$0.00} & 0.66\text{\scriptsize$\pm$0.00} & 0.56\text{\scriptsize$\pm$0.03} & 5.50 & 5.50 & 5.66\\
Fuel & 126 & 0.03\text{\scriptsize$\pm$0.00} & 0.65\text{\scriptsize$\pm$0.04} & 0.28\text{\scriptsize$\pm$0.00} & 1.43\text{\scriptsize$\pm$0.03} & 1.33\text{\scriptsize$\pm$0.00} & 9.33 & 9.33 & 8.74\\
Neghip & 346 & 0.03\text{\scriptsize$\pm$0.00} & 0.76\text{\scriptsize$\pm$0.02} & 0.31\text{\scriptsize$\pm$0.00} & 1.48\text{\scriptsize$\pm$0.02} & 1.36\text{\scriptsize$\pm$0.03} & 10.33 & 10.33 & 8.74\\
Tooth & 204769 & 0.59\text{\scriptsize$\pm$0.01} & 2.29\text{\scriptsize$\pm$0.03} & 3.80\text{\scriptsize$\pm$0.11} & 9.90\text{\scriptsize$\pm$0.10} & 11.19\text{\scriptsize$\pm$0.17} & 6.44 & 3.88 & 2.64\\
Hydrogen & 7 & 0.12\text{\scriptsize$\pm$0.00} & 0.89\text{\scriptsize$\pm$0.03} & 3.67\text{\scriptsize$\pm$0.07} & 12.04\text{\scriptsize$\pm$0.31} & 12.16\text{\scriptsize$\pm$0.04} & 30.58 & 7.42 & 17.48\\
Shockwave & 26 & 0.11\text{\scriptsize$\pm$0.00} & 0.78\text{\scriptsize$\pm$0.01} & 4.25\text{\scriptsize$\pm$0.09} & 11.88\text{\scriptsize$\pm$0.46} & 11.09\text{\scriptsize$\pm$0.04} & 38.64 & 7.09 & 19.06\\
Lobster & 140275 & 0.86\text{\scriptsize$\pm$0.00} & 4.56\text{\scriptsize$\pm$0.02} & 13.02\text{\scriptsize$\pm$0.18} & 35.42\text{\scriptsize$\pm$0.08} & 37.37\text{\scriptsize$\pm$0.44} & 15.14 & 5.30 & 6.35\\
Ventricles & 1727530 & 4.23\text{\scriptsize$\pm$0.04} & 14.50\text{\scriptsize$\pm$0.24} & 30.81\text{\scriptsize$\pm$0.55} & 54.56\text{\scriptsize$\pm$0.77} & 76.06\text{\scriptsize$\pm$0.11} & 7.28 & 3.43 & 1.92\\
Engine & 242894 & 1.31\text{\scriptsize$\pm$0.01} & 5.36\text{\scriptsize$\pm$0.05} & 24.40\text{\scriptsize$\pm$0.14} & 52.86\text{\scriptsize$\pm$0.69} & 60.59\text{\scriptsize$\pm$0.48} & 18.63 & 4.09 & 6.40\\
Bonsai & 109008 & 0.70\text{\scriptsize$\pm$0.01} & 3.49\text{\scriptsize$\pm$0.04} & 76.51\text{\scriptsize$\pm$5.25} & 99.45\text{\scriptsize$\pm$0.62} & 115.3\text{\scriptsize$\pm$1.08} & 109.30 & 4.99 & 23.97\\
Aneurysm & 34494 & 0.51\text{\scriptsize$\pm$0.01} & 1.87\text{\scriptsize$\pm$0.02} & 54.68\text{\scriptsize$\pm$1.44} & 96.05\text{\scriptsize$\pm$0.36} & 111.7\text{\scriptsize$\pm$1.53} & 107.22 & 3.67 & 32.90\\
Foot & 749676 & 1.77\text{\scriptsize$\pm$0.01} & 7.25\text{\scriptsize$\pm$0.05} & 65.80\text{\scriptsize$\pm$0.29} & 102.3\text{\scriptsize$\pm$1.60} & 123.4\text{\scriptsize$\pm$1.25} & 37.18 & 4.10 & 9.48\\
Turbulence & 376226 & 1.15\text{\scriptsize$\pm$0.00} & N/A\tnotec & 48.73\text{\scriptsize$\pm$0.71} & 129.0\text{\scriptsize$\pm$3.16} & 96.43\text{\scriptsize$\pm$0.58} & 42.37 & 42.37 & 14.60\\
Skull & 1502436 & 4.50\text{\scriptsize$\pm$0.05} & 15.60\text{\scriptsize$\pm$0.04} & 85.19\text{\scriptsize$\pm$4.77} & 114.5\text{\scriptsize$\pm$1.51} & 147.9\text{\scriptsize$\pm$0.62} & 18.93 & 3.47 & 3.73\\
Angio & 5761984 & 13.99\text{\scriptsize$\pm$0.06} & 41.48\text{\scriptsize$\pm$0.79} & 120.7\text{\scriptsize$\pm$5.48} & 165.7\text{\scriptsize$\pm$2.76} & 279.4\text{\scriptsize$\pm$1.65} & 8.63 & 2.96 & 1.71\\
Heptane & 27844 & 0.88\text{\scriptsize$\pm$0.00} & 2.78\text{\scriptsize$\pm$0.04} & 110.7\text{\scriptsize$\pm$1.73} & 168.0\text{\scriptsize$\pm$3.25} & 190.5\text{\scriptsize$\pm$1.10} & 125.80 & 3.16 & 31.30\\

\noalign{\smallskip}
\hline
\noalign{\smallskip}

Vertebra & 2782360 & 10.57\text{\scriptsize$\pm$0.14} & 38.79\text{\scriptsize$\pm$0.34} & 503.4\text{\scriptsize$\pm$17.96} & 1135.8\text{\scriptsize$\pm$6.50} & 1342.8\text{\scriptsize$\pm$15.61} & 47.63 & 3.67 & 12.70\\
Zeiss & 423233 & 12.87\text{\scriptsize$\pm$0.15} & 48.02\text{\scriptsize$\pm$0.58} & 2422.9\text{\scriptsize$\pm$214.8} & 4166.6\text{\scriptsize$\pm$195.1} & \xmark\tnoted & 188.26 & 3.73 & 24.43\\
Prone & 14038282 & 36.70\text{\scriptsize$\pm$0.31} & 163.2\text{\scriptsize$\pm$4.06} & 566.6\text{\scriptsize$\pm$19.12} & 973.2\text{\scriptsize$\pm$18.21} & 1300.1\text{\scriptsize$\pm$2.28} & 15.44 & 4.45 & 3.31\\
Marmoset & 43599571 & 251.1\text{\scriptsize$\pm$2.00} & 834.8\text{\scriptsize$\pm$10.50} & \xmark & \xmark & \xmark & - & 3.33 & 1.31\\
Pawpawsaurus & 90157269 & 449.9\text{\scriptsize$\pm$5.69} & 1395.7\text{\scriptsize$\pm$20.95} & \xmark & \xmark & \xmark & - & 3.10 & 1.50\\
Chameleon & 52031009 & 238.2\text{\scriptsize$\pm$7.43} & 807.7\text{\scriptsize$\pm$7.82} & \xmark & \xmark & \xmark & - & 3.39 & 4.75\\

\noalign{\smallskip}
\hline
\noalign{\smallskip}

Dense & 136899 & 4.79\text{\scriptsize$\pm$0.22} & N/A & 208.2\text{\scriptsize$\pm$6.26} & 203.0\text{\scriptsize$\pm$5.16} & 138.5\text{\scriptsize$\pm$2.28} & 43.47 & 28.91 & 4.34\\
Fatty & 516830 & 7.60\text{\scriptsize$\pm$0.13} & N/A & 546.79\text{\scriptsize$\pm$11.84} & 762.5\text{\scriptsize$\pm$7.16} & 906.2\text{\scriptsize$\pm$30.79} & 71.95 & 71.95 & 14.74\\
Hetero & 231627 & 6.76\text{\scriptsize$\pm$0.25} & N/A & 379.7\text{\scriptsize$\pm$13.80} & 287.3\text{\scriptsize$\pm$2.50} & 363.4\text{\scriptsize$\pm$26.86} & 56.17 & 42.50 & 5.70\\
Scattered & 340780 & 10.47\text{\scriptsize$\pm$0.32} & N/A & 825.8\text{\scriptsize$\pm$92.05} & 486.0\text{\scriptsize$\pm$5.25} & 779.8\text{\scriptsize$\pm$16.90} & 78.87 & 46.42 & 7.11\\

\noalign{\smallskip}
\hline
\noalign{\smallskip}

$\text{GRF}^{\alpha=1.0}$ & 42548195 & 160.0\text{\scriptsize$\pm$2.48} & 525.0\text{\scriptsize$\pm$10.95} & 1295.7\text{\scriptsize$\pm$47.19} & 2233.7\text{\scriptsize$\pm$198.0} & 2353.7\text{\scriptsize$\pm$62.40} & 8.10 & 3.28 & 0.84\\
$\text{GRF}^{\alpha=2.5}$ & 29496809 & 94.62\text{\scriptsize$\pm$0.66} & 359.0\text{\scriptsize$\pm$3.71} & 1029.2\text{\scriptsize$\pm$17.13} & 1420.0\text{\scriptsize$\pm$24.0} & 2079.2\text{\scriptsize$\pm$30.20} & 10.88 & 3.79 & 1.42\\
$\text{GRF}^{\alpha=5.0}$ & 1090669 & 7.22\text{\scriptsize$\pm$0.03} & 27.70\text{\scriptsize$\pm$2.98} & 919.5\text{\scriptsize$\pm$18.46} & 1212.0\text{\scriptsize$\pm$6.09} & 1294.4\text{\scriptsize$\pm$106.0} & 127.35 & 3.84 & 18.59\\
$\text{GRF}^{\alpha=7.5}$ & 27 & 4.87\text{\scriptsize$\pm$0.00} & 11.51\text{\scriptsize$\pm$4.03} & 964.3\text{\scriptsize$\pm$13.64} & 1170.7\text{\scriptsize$\pm$16.41} & 1135.9\text{\scriptsize$\pm$23.93} & 198.01 & 2.36 & 27.56\\

\noalign{\smallskip}
\hline
\noalign{\smallskip}

Average  & \textbf{-----} & \textbf{---} & \textbf{---} & \textbf{---} & \textbf{---} & \textbf{---} & 53.24 & 11.53 & 10.77\\
\noalign{\smallskip}

\hline
\end{tabular}

\begin{tablenotes}\footnotesize
\setlength{\premulticols}{0pt}
\setlength{\multicolsep}{0pt}
\setlength{\postmulticols}{0pt}
\begin{multicols}{2}
\item[a] Topo/best Speedup: TopoGPU's speedup over the fastest baseline.
\item[b] TopoGPU MVox/s: throughput in millions of voxels per second.
\item[c] N/A: Cubicle does not support floating-point inputs.
\item[d] \xmark: DIPHA, GUDHI, and CRipser failed on large or complex inputs.
\end{multicols}
\end{tablenotes}

\label{tab:wholepipe}
\end{threeparttable}
\end{table*}

We evaluated TopoGPU from two perspectives: (i) end-to-end performance of the full persistent homology computation pipeline, from data loading through boundary matrix reduction, and (ii) a stage-wise runtime breakdown of TopoGPU reporting per-stage time percentages and peak host RAM usage for each dataset. All tables report standard deviations over five runs. TopoGPU produces results identical to those of tcripser (CRipser with the T-construction).

\subsection{Performance Comparisons in End-to-End Persistent Homology Computation}
Table~\ref{tab:wholepipe} reports end-to-end runtimes. TopoGPU outperforms all baselines across every dataset. Against CRipser it achieves up to \(198.01 \times\) speedup (average \(53.24 \times\)). Relative to the best-performing baseline, it reaches up to \(71.95 \times\) speedup (average \(11.53 \times\)). We report TopoGPU's speedup over CRipser as it is arguably the most widely used package for persistent homology computation on cubical complexes; speedups over each baseline are in Table~S.II of the supplementary material.

DIPHA, GUDHI, and CRipser crashed on large inputs due to their non-streaming designs, and they failed on complex cases producing oversized boundary matrices, which easily exceeded available RAM. On our datasets, the boundary matrix produced by TopoGPU was, on average, \(8.99\times\) larger than that of the Morse complex without partitioning. Nevertheless, TopoGPU still provides a substantial reduction over the cubical complex: the boundary matrix of the cubical complex was, on average, \(321.46\times\) larger than that of TopoGPU. Detailed comparisons are in Table~S.III of the supplementary material.

A key metric to gauge computation efficiency is \textbf{voxel throughput}, defined as the number of voxels processed per second and measured in millions (\(10^6\)) of voxels per second (MVox/s). TopoGPU achieved an average throughput of 10.77 MVox/s, peaking at 32.90 MVox/s on the Aneurysm volume.

\setlength{\tabcolsep}{5.0pt}
\begin{table*}[!htb]
\caption{Stage-wise runtime breakdown and peak host RAM and device memory usage for TopoGPU.}
\centering
\begin{threeparttable}
\begin{tabular}{cccccccccccc}
\hline
\multirow{2}{*}{Dataset} & \multirow{2}{*}{Size} & \multirow{2}{*}{\shortstack{Data\\Type}} & \multirow{2}{*}{\shortstack{BoundMat\\Size\tnote{a}}} & \multirow{2}{*}{Dev$\leftrightarrow$Host} & \multirow{2}{*}{\shortstack{Morse\\Matching}} & \multirow{2}{*}{\shortstack{Topo\\Sort}} & \multirow{2}{*}{\shortstack{Path\\Parity}} & \multirow{2}{*}{\shortstack{Matrix\\Reduction}} & \multirow{2}{*}{I/O} & \multicolumn{1}{c}{RAM\tnote{b}} & \multicolumn{1}{c}{VRAM\tnote{c}} \\
& & & & & & & & & & (GB) & (GB) \\ \hline
Silicium & $98\!\times\!34\!\times\!34$ & uint8 & $3.24 \cdot 10^{4}$ & 0.00\% & 1.58\% & 2.40\% & 22.20\% & 44.72\% & 10.17\% & 0.19 & 0.34\\
Fuel & $64\!\times\!64\!\times\!64$ & uint8 & $5.47 \cdot 10^{4}$ & 0.00\% & 1.48\% & 3.12\% & 47.76\% & 28.02\% & 14.62\% & 0.19 & 0.31\\
Neghip & $64\!\times\!64\!\times\!64$ & uint8 & $9.09 \cdot 10^{4}$ & 0.00\% & 1.39\% & 3.57\% & 48.08\% & 26.78\% & 10.18\% & 0.20 & 0.31\\
Tooth & $103\!\times\!94\!\times\!161$ & uint8 & $2.80 \cdot 10^{6}$ & 0.19\% & 0.63\% & 1.25\% & 15.38\% & 77.85\% & 3.54\% & 0.42 & 0.53\\
Hydrogen & $128\!\times\!128\!\times\!128$ & uint8 & $2.72 \cdot 10^{5}$ & 0.02\% & 2.64\% & 6.94\% & 55.33\% & 19.82\% & 14.05\% & 0.28 & 0.91\\
Shockwave & $64\!\times\!64\!\times\!512$ & uint8 & $3.98 \cdot 10^{4}$ & 0.01\% & 1.93\% & 5.69\% & 78.76\% & 5.69\% & 7.62\% & 0.22 & 0.31\\
Lobster & $301\!\times\!324\!\times\!56$ & uint8 & $3.61 \cdot 10^{6}$ & 0.28\% & 0.82\% & 2.22\% & 15.36\% & 74.37\% & 2.95\% & 0.62 & 1.74\\
Ventricles & $256\!\times\!256\!\times\!124$ & uint8 & $1.82 \cdot 10^{7}$ & 1.31\% & 0.54\% & 1.35\% & 10.11\% & 84.31\% & 1.01\% & 1.62 & 0.91\\
Engine & $256\!\times\!256\!\times\!128$ & uint8 & $5.55 \cdot 10^{6}$ & 0.41\% & 0.71\% & 2.50\% & 16.83\% & 71.51\% & 3.96\% & 0.65 & 0.91\\
Bonsai & $256\!\times\!256\!\times\!256$ & uint8 & $2.84 \cdot 10^{6}$ & 0.22\% & 4.72\% & 7.49\% & 48.51\% & 26.44\% & 10.12\% & 0.50 & 0.91\\
Aneurysm & $256\!\times\!256\!\times\!256$ & uint8 & $1.02 \cdot 10^{6}$ & 0.09\% & 6.36\% & 9.18\% & 60.67\% & 12.12\% & 9.22\% & 0.39 & 0.91\\
Foot & $256\!\times\!256\!\times\!256$ & uint8 & $8.11 \cdot 10^{6}$ & 0.61\% & 0.96\% & 3.04\% & 21.24\% & 67.21\% & 5.15\% & 0.96 & 0.91\\
Turbulence & $256\!\times\!256\!\times\!256$ & float32 & $5.31 \cdot 10^{6}$ & 0.40\% & 1.06\% & 5.40\% & 31.99\% & 45.39\% & 12.25\% & 0.78 & 0.92\\
Skull & $256\!\times\!256\!\times\!256$ & uint8 & $2.01 \cdot 10^{7}$ & 1.38\% & 0.46\% & 1.67\% & 18.50\% & 74.17\% & 2.23\% & 1.74 & 0.91\\
Angio & $416\!\times\!512\!\times\!112$ & uint16 & $5.05 \cdot 10^{7}$ & 3.61\% & 0.32\% & 0.98\% & 10.61\% & 83.61\% & 0.60\% & 4.10 & 1.91\\
Heptane & $302\!\times\!302\!\times\!302$ & uint8 & $1.64 \cdot 10^{6}$ & 0.19\% & 6.92\% & 9.37\% & 57.88\% & 14.52\% & 10.61\% & 0.71 & 1.74\\
\noalign{\smallskip}\hline\noalign{\smallskip}
Vertebra & $512\!\times\!512\!\times\!512$ & uint16 & $4.14 \cdot 10^{7}$ & 3.04\% & 1.58\% & 4.32\% & 32.92\% & 49.29\% & 4.79\% & 4.50 & 2.09\\
Zeiss & $680\!\times\!680\!\times\!680$ & uint8 & $4.62 \cdot 10^{7}$ & 3.51\% & 4.61\% & 8.39\% & 39.10\% & 37.91\% & 5.25\% & 6.29 & 2.05\\
Prone & $512\!\times\!512\!\times\!463$ & uint16 & $1.29 \cdot 10^{8}$ & 9.32\% & 0.30\% & 1.22\% & 10.74\% & 75.62\% & 1.22\% & 10.16 & 2.09\\
Marmoset & $1024\!\times\!1024\!\times\!314$ & uint8 & $5.40 \cdot 10^{8}$ & 35.02\% & 0.28\% & 1.12\% & 6.56\% & 55.78\% & 0.24\% & 38.94 & 2.06\\
Pawpawsaurus & $958\!\times\!646\!\times\!1088$ & uint16 & $8.31 \cdot 10^{8}$ & 27.36\% & 0.29\% & 1.12\% & 6.39\% & 61.59\% & 0.30\% & 54.04 & 1.96\\
Chameleon & $1024\!\times\!1024\!\times\!1080$ & uint16 & $5.82 \cdot 10^{8}$ & 26.75\% & 0.29\% & 1.42\% & 9.38\% & 61.19\% & 0.83\% & 42.28 & 4.17\\
\noalign{\smallskip}\hline\noalign{\smallskip}
Dense & $287\!\times\!359\!\times\!202$ & float32 & $8.53 \cdot 10^{6}$ & 0.80\% & 1.03\% & 2.46\% & 16.41\% & 74.97\% & 3.05\% & 1.15 & 1.56\\
Fatty & $440\!\times\!518\!\times\!488$ & float32 & $2.18 \cdot 10^{7}$ & 1.82\% & 2.74\% & 6.58\% & 33.47\% & 47.78\% & 7.11\% & 2.57 & 2.00\\
Hetero & $359\!\times\!359\!\times\!299$ & float32 & $1.32 \cdot 10^{7}$ & 1.07\% & 0.99\% & 2.76\% & 15.43\% & 73.84\% & 3.11\% & 1.62 & 1.56\\
Scattered & $434\!\times\!446\!\times\!384$ & float32 & $2.14 \cdot 10^{7}$ & 1.77\% & 1.00\% & 3.70\% & 18.33\% & 70.74\% & 3.85\% & 2.55 & 1.73\\
\noalign{\smallskip}\hline\noalign{\smallskip}
$\text{GRF}^{\alpha=1.0}$ & $512\!\times\!512\!\times\!512$ & int32 & $3.38 \cdot 10^{8}$ & 24.19\% & 0.11\% & 0.39\% & 8.50\% & 66.49\% & 0.31\% & 24.54 & 2.16\\
$\text{GRF}^{\alpha=2.5}$ & $512\!\times\!512\!\times\!512$ & int32 & $2.48 \cdot 10^{8}$ & 17.88\% & 0.33\% & 1.19\% & 9.70\% & 69.92\% & 0.53\% & 18.40 & 2.16\\
$\text{GRF}^{\alpha=5.0}$ & $512\!\times\!512\!\times\!512$ & int32 & $2.64 \cdot 10^{7}$ & 2.59\% & 1.50\% & 8.06\% & 33.27\% & 46.04\% & 7.26\% & 3.18 & 2.16\\
$\text{GRF}^{\alpha=7.5}$ & $512\!\times\!512\!\times\!512$ & int32 & $4.46 \cdot 10^{5}$ & 0.59\% & 7.62\% & 12.38\% & 39.31\% & 23.91\% & 14.19\% & 1.15 & 2.16\\
\noalign{\smallskip}\hline\noalign{\smallskip}
Average  & \textbf{---} & \textbf{---} & \textbf{---} & 5.48\% & 1.84\% & 4.04\% & 27.96\%  & 52.39\% & 5.68\% & \textbf{---} & \textbf{---}\\
\hline
\end{tabular}
\begin{tablenotes}\footnotesize
\item[a] BoundMat Size: the number of ones over \( \mathbb{Z}_2\) in the discrete Morse boundary matrix.
\item[b] RAM: peak host (CPU) RAM used by TopoGPU.
\item[c] VRAM: peak device (GPU) memory used by TopoGPU.
\end{tablenotes}
\label{tab:breakdown}
\end{threeparttable}
\end{table*}

\subsection{Runtime Breakdown}
Table~\ref{tab:breakdown} reports TopoGPU's stage-wise runtime breakdown and peak RAM and VRAM usage. On average, boundary matrix reduction dominates runtime at \(52.39\%\), followed by path-parity computation at \(27.96\%\), I/O at \(5.68\%\), and device-host transfers at \(5.48\%\). Topological sorting and Morse matching are minor.

Small and medium volumes are often path parity heavy (e.g., Shockwave \(78.76\%\), Bonsai \(48.51\%\)), whereas large and complex volumes are reduction heavy (e.g., Pawpawsaurus \(61.59\%\), Chameleon \(61.19\%\)). Device-host transfer is dominated by sending boundary relations from device to host and varies widely--from negligible (e.g., Silicium \(0.00\%\)) to substantial (e.g., Marmoset \(35.02\%\)). Peak RAM usage scales with boundary matrix size, ranging from 0.19 GB on the smallest volume to 54.04 GB on Pawpawsaurus. TopoGPU ranks the second-lowest in peak RAM among the baselines, surpassed only by Cubicle which uses less RAM by offloading boundary relations to external storage.

Figure~\ref{fig:proportion} presents a stacked-bar runtime breakdown of TopoGPU for the first 16 volumes; colored bars indicate per-stage shares, and the overlaid black curve shows total runtime per volume. From the column reporting the number of persistence pairs in Table~\ref{tab:wholepipe}, path parity (orange bars) dominates runtime when the pair count is small, whereas boundary matrix reduction (green bars) dominates when it is large.  The remainder (yellow bars) comprises initialization, array allocation, and resource release.

We also observe a positive correlation between total runtime and the share of boundary matrix reduction. We quantify this trend by computing the Pearson correlation \cite{pearson1895} between \(\log_{10}\) total runtime and the percentage of time spent in boundary matrix reduction, obtaining \(r = 0.64\) with \(p = 0.0076\). We further compute the Spearman rank correlation \cite{spearman04} between these two quantities, obtaining \(\rho = 0.62\) with \(p = 0.010\). These results indicate a statistically significant positive association between longer TopoGPU runtimes and a greater matrix reduction share, suggesting that matrix reduction is the primary bottleneck in our GPU persistent homology computation pipeline.

\begin{figure*}[!htb]
\centering
\includegraphics[width=7.1in]{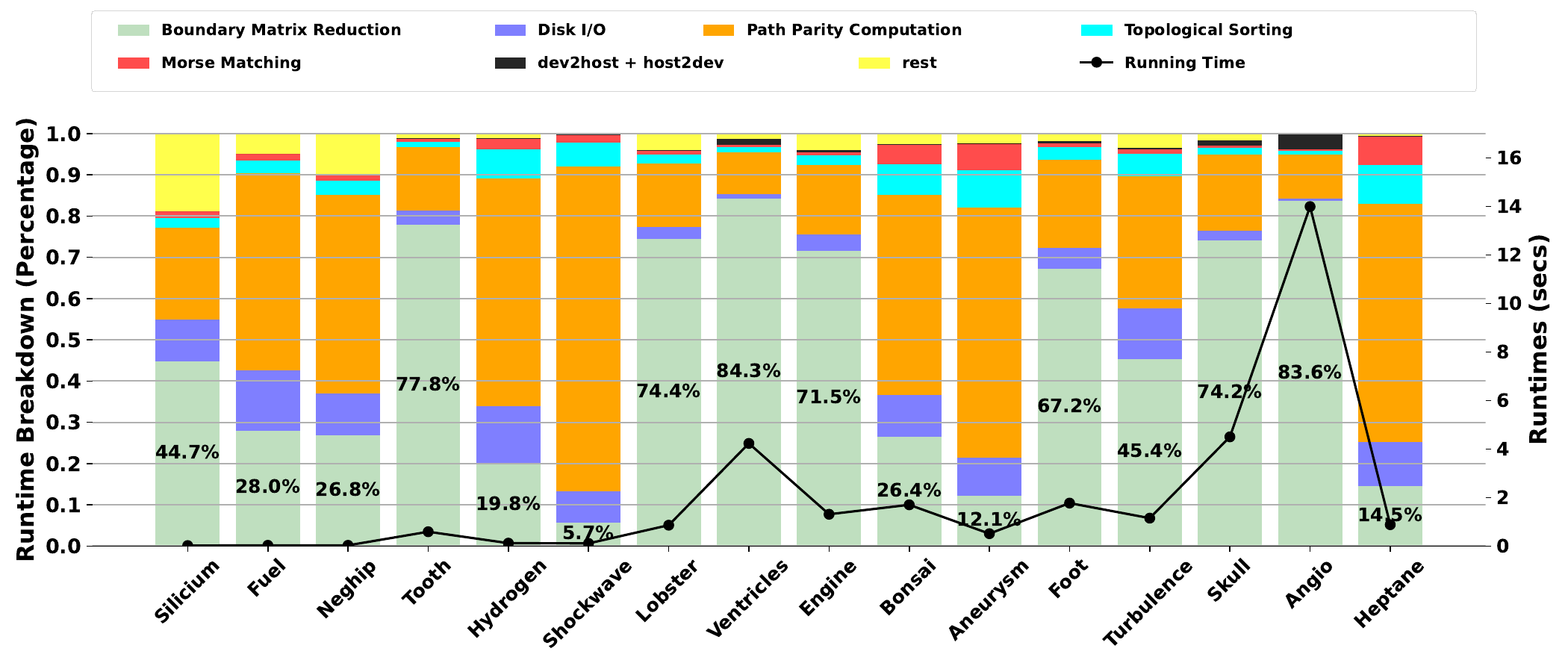}%
\caption{Per-stage runtime breakdown of TopoGPU. Stacked bars show stage-wise percentages (left y-axis); the black curve shows total runtime of the first 16 volumes (right y-axis). Percentages inside the green bars indicate the share of boundary matrix reduction.}
\label{fig:proportion}
\end{figure*}

\subsection{Comparative Analysis of Topological Sorting}
TopoSort is a parallel topological sorting algorithm for the bounded-degree matching graphs arising in our setting, and its applicability extends beyond persistent homology computation to DAGs with similar local structure. This subsection evaluates TopoSort as a standalone algorithm: the inputs are matching graphs, and the output is a topological ordering. We compare TopoSort with two baselines:

\begin{itemize}
    \item \textbf{CPUKahn}: the classical Kahn's algorithm \cite{khan1962}, which iteratively identifies zero-in-degree vertices, appends them to the topological order, and removes their outgoing edges.

    \item \textbf{GPUKahn}: a na\"ive GPU parallelization of Kahn's algorithm, which concurrently identifies and removes zero-in-degree vertices. Its parallelism is therefore limited by the size of the zero-in-degree frontier in each iteration.
\end{itemize}

Figure~\ref{fig:toposorttiming} shows runtime curves for CPUKahn (black), GPUKahn (blue), and TopoSort (red), along with the TopoSort-over-GPUKahn speedup curve (orange), across 16 volumes ordered by increasing matching graph vertex count (annotated above each bar). TopoSort consistently outperforms CPUKahn and GPUKahn, achieving an average of 46.13$\times$ and up to 54.81$\times$ speedup over CPUKahn, and 1.61$\times$--9.80$\times$ speedups over GPUKahn. While runtimes of CPUKahn and GPUKahn increase with vertex count, TopoSort exhibits superior scalability, with markedly flatter runtime growth.

The figure also includes stacked bar charts showing the percentage of vertices topologically sorted while \(d^-(\cdot) > 0\) (green) versus \(d^-(\cdot) = 0\) (red). On average, 89.0\% of vertices are sorted while still having nonzero in-degree. This highlights the benefit of TopoSort over source-frontier-based methods: instead of processing only zero-in-degree vertices, TopoSort can order many vertices whose dependencies are already resolved through predecessor tracing, thereby exposing substantially more parallelism. We further report the empirical iteration counts of TopoSort in Table~S.V of the supplementary material, showing that despite the large variation in graph size, the number of iterations remains small and stable across all volumes, ranging from 11 to 17 with an average of 13.25.

\begin{figure*}[!htb]
\centering
\includegraphics[width=7.0in]{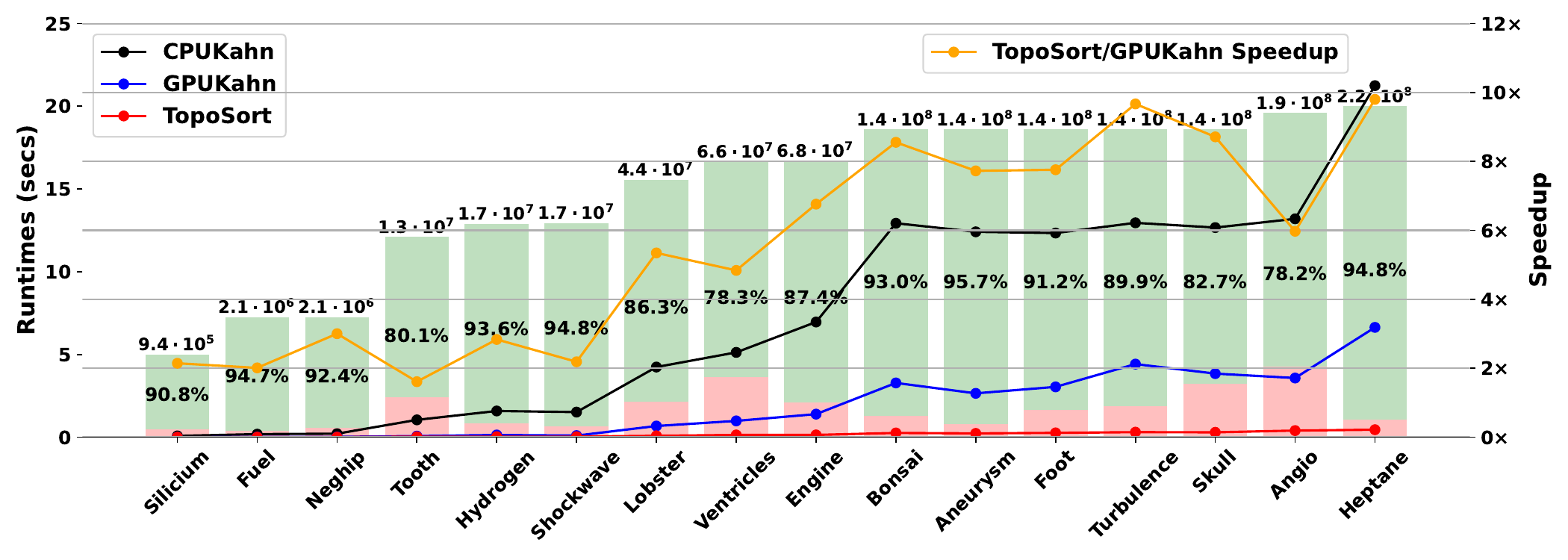}%
\caption{
Runtime curves for CPUKahn, GPUKahn, and TopoSort (left y-axis) with TopoSort-over-GPUKahn speedup curve (right y-axis). Stacked bar charts show the percentage of vertices topologically sorted while \(d^-(\cdot) > 0\) (green; percentage displayed) versus when \(d^-(\cdot) = 0\) (red). Volumes are ordered by increasing vertex count in their matching graphs, with vertex count annotated above each bar whose height corresponds to the count.}
\label{fig:toposorttiming}
\end{figure*}

\subsection{Comparison with GPU-Based 2D Cubical Persistence}
To our knowledge, CuMPerLay~\cite{cmp} is the only other GPU implementation of cubical persistence, and it is restricted to 2D images. We therefore compare against it in its native 2D setting on the two datasets used in its evaluation: the ISIC 2018 test set~\cite{ISIC1,ISIC2} (1{,}000 images, resized to $512\times512$) and the Eye Disease (Glaucoma) dataset~\cite{Glaucoma} (5{,}335 images at their original $1690\times2004$ resolution), all converted to grayscale.

CuMPerLay derives its throughput from batching. Its input is a 4D tensor designed for batched multiparameter persistence within deep networks, which is internally flattened into a stack of 2D images. Each image is handled concurrently by its own GPU worker, while the pairing loop within an image is sequential. We therefore fill both the batch and channel dimensions with distinct images, sweep their sizes to fully exploit the GPU, and report its best configuration. TopoGPU is likewise run with concurrent processes tuned for best throughput. All experiments use an NVIDIA RTX 4080. Table~\ref{tab:cmp} reports end-to-end runtimes: TopoGPU is $3.3\times$ faster on ISIC 2018 and $24.6\times$ faster on the higher-resolution Glaucoma dataset, where per-image parallelism increasingly outweighs batch-level concurrency.

\begin{table}[!t]
\caption{End-to-end runtime versus CuMPerLay on 2D image datasets.}
\centering
\begin{tabular}{ccccc}
\hline
\multirow{2}{*}{Dataset} & \multirow{2}{*}{\shortstack{Resolution\\/ \#Images}} & \multicolumn{2}{c}{Avg. time per image (ms)} & \multirow{2}{*}{Speedup} \\
 & & CuMPerLay & TopoGPU & \\ \hline
ISIC 2018 & $512\times512$ / 1{,}000 & 21.30 & 6.43 & 3.31$\times$ \\
Glaucoma  & $1690\times2004$ / 5{,}335 & 1128.44 & 45.87 & 24.60$\times$ \\ \hline
\end{tabular}
\label{tab:cmp}
\end{table}

\subsection{Deep Learning Pipeline Integration}
To demonstrate TopoGPU's practicality, we integrate it into a topology-preserving deep learning segmentation workflow of Hu et al.\cite{hu2019}. Their framework enforces topological fidelity of the segmentation results by minimizing the \textit{Wasserstein distance} between the persistence diagrams of the network's predicted likelihood map and the ground truth.

\begin{figure}[!htb]
\centering
\includegraphics[width=3.5in]{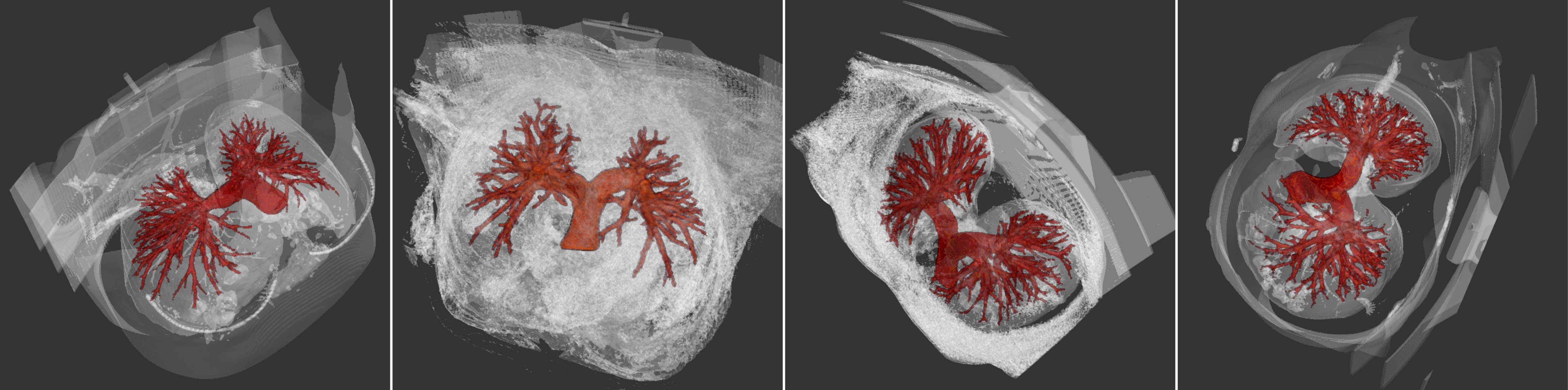}
\caption{PARSE dataset visualizations with ground truth segmentations in red; subjects 005, 082, 120, and 285 shown from left to right.}
\label{fig:parsevis}
\end{figure}

\begin{figure}[!htb]
\centering
\includegraphics[width=3.5in]{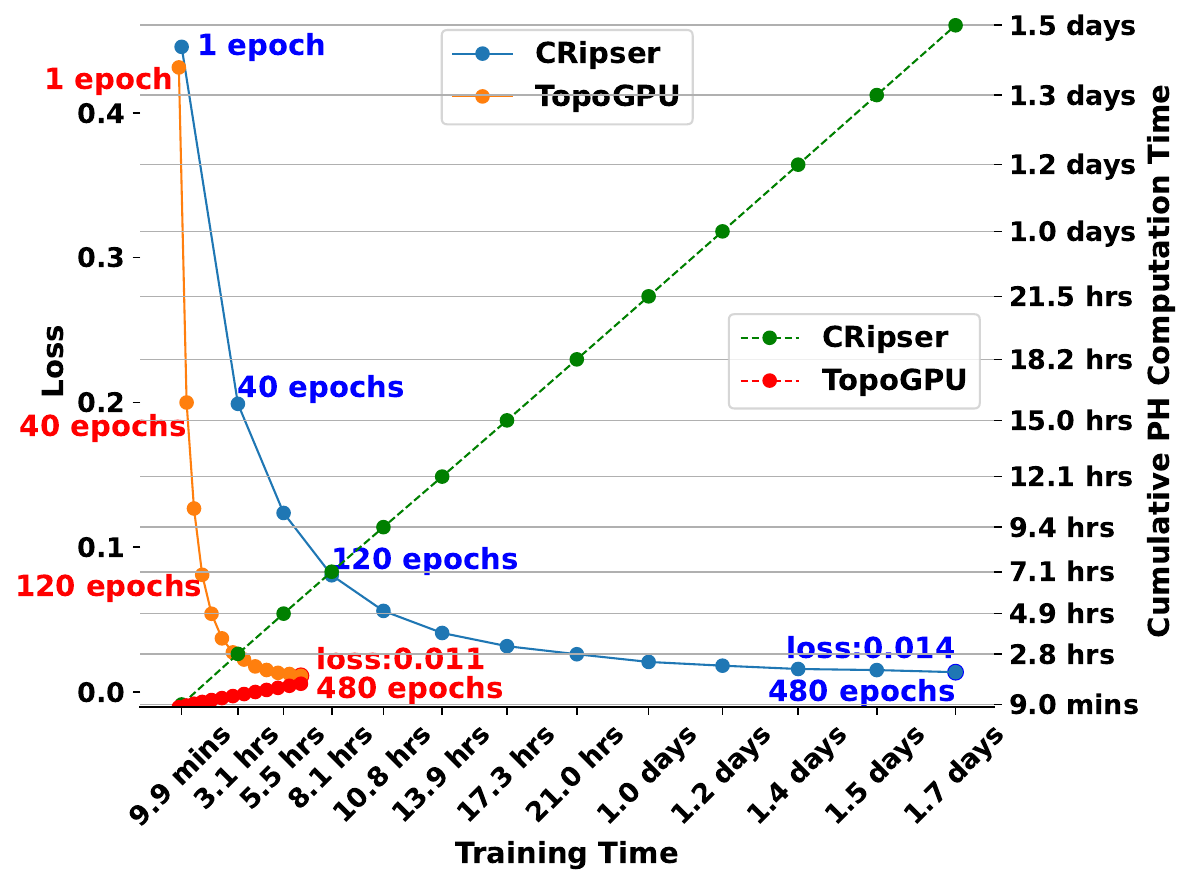}
\caption{Left y-axis: training loss versus training time. Right y-axis: cumulative persistent homology (PH) computation time versus training time. Both networks trained for 480 epochs.}
\label{fig:dltrain}
\end{figure}

We conduct a comparative analysis of Hu et al.'s topology-preserving frameworks equipped with TopoGPU versus with CRipser (the framework's default persistent homology tool) on an Ubuntu 18.04 multi-GPU workstation with NVIDIA GTX 1080 GPUs (2{,}560 CUDA cores and 12 GB GDDR5 memory). TopoGPU is compiled with CUDA 12.1 and \texttt{g++} 7.5.0, and wrapped via Cython \cite{cython} for Python interoperability. Cubicle is excluded from the comparison due to its lack of support for floating-point inputs. During training, one GPU handles network parameters and backpropagation while a second GPU is dedicated to persistent homology computation.

Experiments use the Pulmonary Artery Segmentation (PARSE) dataset \cite{parse}, designed for pulmonary artery structure studies in medical imaging. The dataset is visualized in Figure~\ref{fig:parsevis}. Each volume is resized to \(128 \times 128 \times 128\). The network is trained with a batch size of 4 over 480 epochs, computing persistent homology per epoch on 96 distinct \(32 \times 32 \times 32\) likelihood maps and the corresponding 96 \(32 \times 32 \times 32\) segmentation masks.

As illustrated in Figure~\ref{fig:dltrain}, the TopoGPU-accelerated framework converges to a loss of 0.011 in 6.45 hours (orange curve), averaging 48.56 seconds per epoch. In contrast, the framework employing CRipser converges to 0.014 after 1.71 days (blue curve), averaging 309.76 seconds per epoch. TopoGPU accounts for only \textbf{19.09\%} of the total training time, equating to \textbf{1.23 hours} (red curve), whereas CRipser dominates the process, consuming \textbf{87.31\%} of the training time and requiring \textbf{35.8 hours} (green curve). These findings highlight that TopoGPU substantially reduces the computational burden of persistent homology, so that it is no longer the dominant cost in this training workflow.

\section{Limitations and Future Work}
\label{sec:conclusion}
We introduce TopoGPU, a GPU streaming algorithm for persistent homology computation on image data. TopoGPU proceeds in four stages: (1) cubical complex construction; (2) stratification-aware discrete Morse matching; (3) Morse boundary computation via parallel topological sorting and parallel path parity computation; and (4) CPU-based boundary matrix reduction.

Across all datasets, TopoGPU outperforms every baseline. It supports 8-, 16-, and 32-bit signed and unsigned integers as well as floating-point inputs. Integrated into a topology-preserving deep learning framework, TopoGPU substantially reduces the burden of persistent homology computation during training, demonstrating both efficiency and practicality.

\begin{table}[!htb]
\caption{Examples of TopoGPU failure cases.}
\centering
\begin{threeparttable}
\begin{tabular}{@{}cccc@{}}
\hline
\multirow{2}{*}{Dataset} & \multirow{2}{*}{Size} & \multicolumn{1}{c}{Precomp.} & \multicolumn{1}{c}{End-to-End} \\
& & (s) & (s) \\ 
\hline
\noalign{\smallskip}
UniformRand & $2048\times2048\times2048$ & 231.2\text{\scriptsize$\pm$0.24} & \xmark \\
Binary    & $2048\times2048\times2048$ & 156.2\text{\scriptsize$\pm$0.20} & \xmark \\
Constant  & $2048\times2048\times2048$ & 147.0\text{\scriptsize$\pm$0.17} & \xmark \\ 
\noalign{\smallskip}
\hline
\end{tabular}
\label{tab:failcases}
\end{threeparttable}
\end{table}

TopoGPU can process inputs larger than GPU memory by accumulating boundary relations in host memory or, in principle, streaming them to disk. However, boundary matrix reduction must materialize the full matrix in memory; if it exceeds available RAM, reduction fails. Table~\ref{tab:failcases} reports failure cases of TopoGPU on three datasets---Uniform Random Noise (UniformRand), Binary data containing randomly distributed 0s and 1s, and Constant data filled entirely with 1s, each with 10 volumes of 32 GB.

TopoGPU precomputation covers the GPU-accelerated portion of the pipeline, from cubical complex construction through path parity computation, and outputs boundary relations ready for boundary matrix reduction. On these volumes, precomputation finishes quickly, but matrix reduction fails as a single volume requires more than 201.9 GB of RAM to hold the boundary matrix. Parallelizing this step remains challenging: the Gaussian-elimination-like process in boundary matrix reduction is largely sequential and CPU-intensive, presenting significant obstacles for GPU parallelization. Separately, on very small inputs (e.g., \(32 \times 32\) images in our study), the overheads of TopoGPU, such as CUDA context creation, initialization, and host-device transfers, can exceed the actual computation cost, and a lightweight CPU method such as CRipser may finish faster in this regime. A detailed study of this regime is provided in Section~VI and Table~S.VI of the supplementary material.

\section*{Acknowledgments}
This work was supported in part by NSF Grants CCF-2318633, CCF-2144901, NIH Grants R01NS143143, and R01CA297843.

\bibliography{Ref01}
\bibliographystyle{IEEEtran}

\newpage
\begin{IEEEbiography}[{\includegraphics[width=1in,height=1.25in,clip,keepaspectratio]{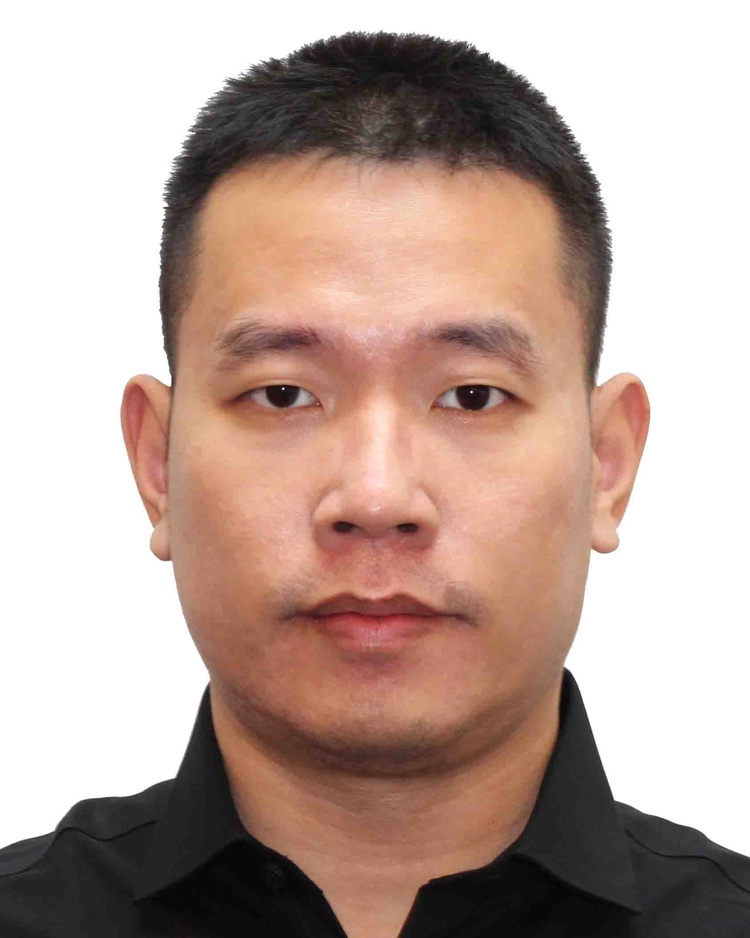}}]{Fan Wang}
received the BSc (Hons) in Computer Science from Lanzhou University in 2012 and the MSc in Electrical Engineering from the National University of Singapore in 2013. He joined the Advanced Digital Sciences Center in 2013 and then embarked on his PhD at Stony Brook University in Computer Science in 2016. His research and publications are centered on computational topology, GPU algorithms, generative deep learning networks, and 3D vision.
\end{IEEEbiography}

\begin{IEEEbiography}[{\includegraphics[width=1in,height=1.25in,clip,keepaspectratio]{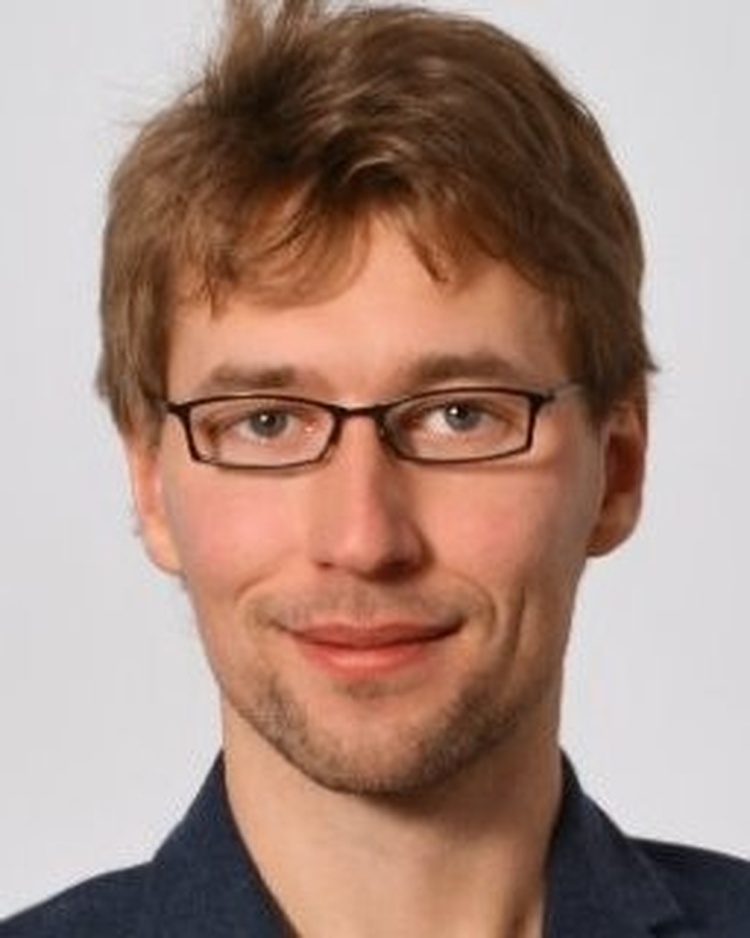}}]{Hubert Wagner}  
received his Msc (Hons) and PhD degrees from the Jagiellonian University in 2009 and 2014. 

Between 2015 and 2020 he did a postdoctoral fellowship at the Institute of Science and Technology Austria (ISTA). Since 2021 he has been an assistant professor at University of Florida. His research focuses on computational geometry and topology with applications.
\end{IEEEbiography}

\begin{IEEEbiography}[{\includegraphics[width=1in,height=1.25in,clip,keepaspectratio]{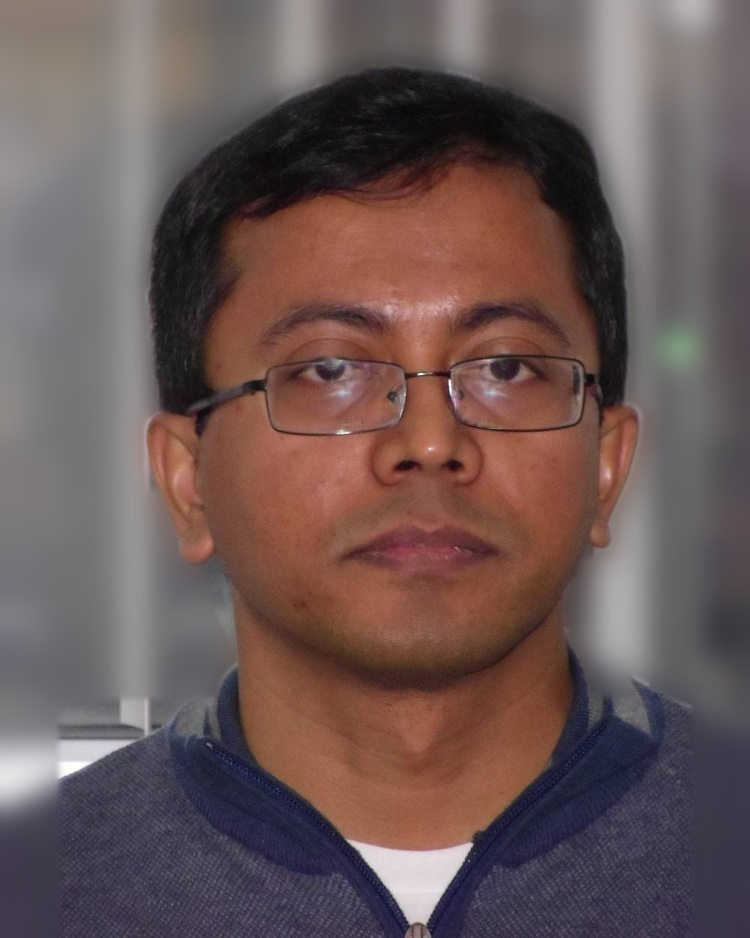}}]{Rezaul Chowdhury} is an Associate Professor of Computer Science at the State University of New York at Stony Brook. He holds a joint appointment with the Institute for Advanced Computational Science (IACS) and an affiliate appointment with the Department of Applied Mathematics and Statistics. He received his B.Sc. degree in Computer Science \& Engineering from Bangladesh University of Engineering \& Technology, and Ph.D. in Computer Science from the University of Texas at Austin. He is a recipient of an NSF CAREER award. His research interests include design and analysis of combinatorial algorithms and data structures, parallel and cache-efficient algorithms, algorithm engineering, quantitative finance, bioinformatics, and computational biology.
\end{IEEEbiography}

\begin{IEEEbiography}[{\includegraphics[width=1in,height=1.25in,clip,keepaspectratio]{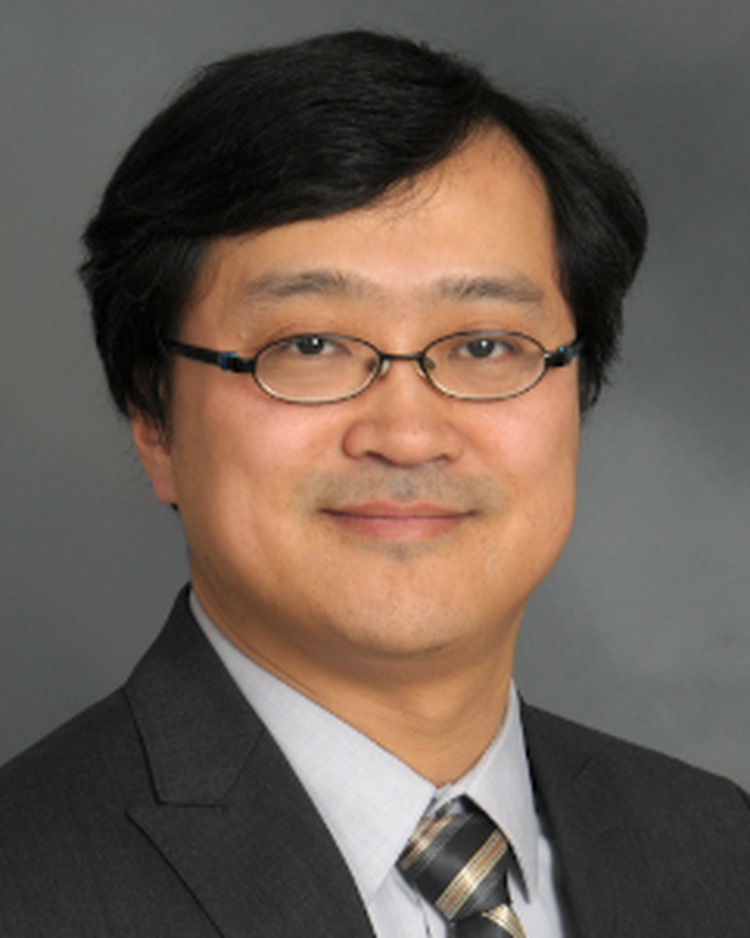}}]{Chao Chen} is an associate professor at Stony Brook University, Department of Biomedical Informatics. His research interest spans biomedical image analysis, topological data analysis (TDA) and machine learning. He develops principled learning methods to address problems in biomedical image analysis and machine learning from a unique topological view. His research is supported by multiple awards, including an NSF CAREER award to develop topology-driven learning methods.
\end{IEEEbiography}
\vfill

\clearpage
\includepdf[pages=-]{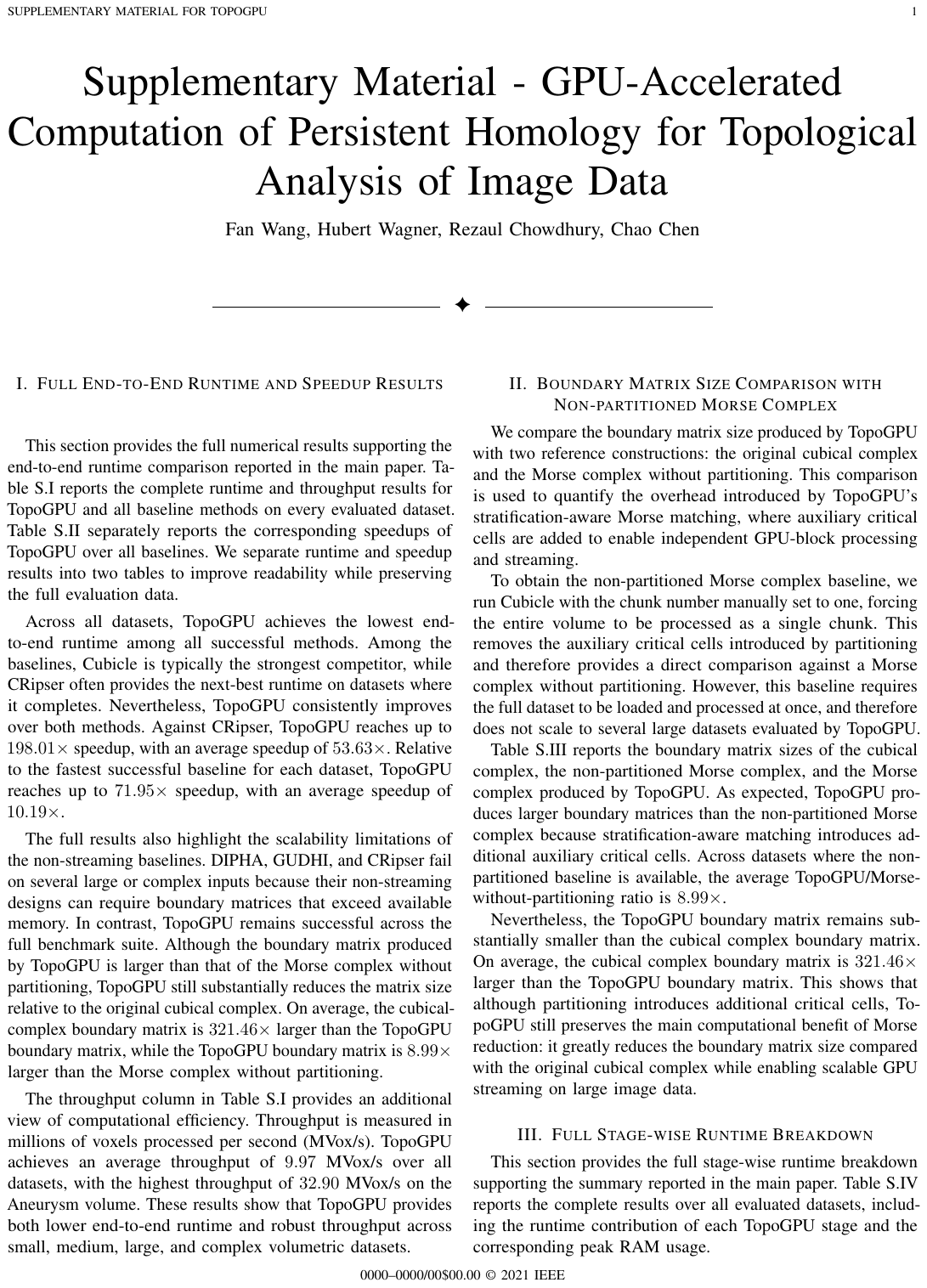}

\end{document}